\newtheorem{theorem}{Theorem}
\newtheoremstyle{break}
  {\topsep}{\topsep}%
  {\itshape}{}%
  {\bfseries}{}%
  {\newline}{}%
\theoremstyle{break}
\newtheorem{definition}{Definition}
\begin{document}

\title{Multi-Task Learning with Multi-Task Optimization}

\author{Lu Bai, 
        Abhishek Gupta,
        and Yew-Soon Ong,~\IEEEmembership{Fellow,~IEEE}
\thanks{This paper was produced by the IEEE Publication Technology Group. They are in Piscataway, NJ.}
\thanks{Manuscript received April 19, 2021; revised August 16, 2021.}}

\markboth{Journal of \LaTeX\ Class Files,~Vol.~14, No.~8, August~2021}%
{Shell \MakeLowercase{\textit{et al.}}: A Sample Article Using IEEEtran.cls for IEEE Journals}

\IEEEpubid{0000--0000/00\$00.00~\copyright~2021 IEEE}

\maketitle

\begin{abstract}
Multi-task learning solves multiple correlated tasks. However, conflicts may exist between them. In such circumstances, a single solution can rarely optimize all the tasks, leading to performance trade-offs. To arrive at a set of optimized yet well-distributed models that collectively embody different trade-offs in one algorithmic pass, this paper proposes to view \emph{Pareto multi-task learning} through the lens of multi-task optimization. Multi-task learning is first cast as a multi-objective optimization problem, which is then decomposed into a diverse set of unconstrained scalar-valued subproblems. These subproblems are solved jointly using a novel multi-task gradient descent method, whose uniqueness lies in the iterative transfer of model parameters among the subproblems during the course of optimization. A theorem proving faster convergence through the inclusion of such transfers is presented. We investigate the proposed multi-task learning with multi-task optimization for solving various problem settings including image classification, scene understanding, and multi-target regression. Comprehensive experiments confirm that the proposed method significantly advances the state-of-the-art in discovering sets of Pareto-optimized models. Notably, on the large image dataset we tested on, namely NYUv2, the hypervolume convergence achieved by our method was found to be nearly two times faster than the next-best among the state-of-the-art. 
\end{abstract}

\begin{IEEEkeywords}
Multi-task optimization; multi-objective optimization; Pareto front; multi-task learning
\end{IEEEkeywords}

\section{Introduction}

\IEEEPARstart{M}{ulti}-task learning (MTL) solves multiple correlated tasks by jointly training them to improve generalization ability, either by shallow models \cite{zhang2021survey} or deep models \cite{ruder2017overview}. Many MTL approaches have been proposed in the past years and have shown superior performance in comparison to their single-task counterpart in various domains, such as computer vision \cite{liu2019end}, bioinformatics \cite{zhang2016deep}, speech and natural language processing \cite{collobert2008unified}, to name a few. Most MTL approaches are proposed to find one single solution, which usually minimizes a unified global function composed of all the task-specific objective functions. However, in many real-world applications, the tasks could conflict with each other. For example, as shown in \cite{kendall2018multi}, when learning two tasks (e.g., semantic segmentation and pixel-wise metric depth prediction) together with arbitrary weights of the loss functions, the performance of one or both could worsen compared to learning them independently. This highlights the existence of conflicts between tasks. In such circumstances, there may rarely be a single solution that is simultaneously performant on all tasks. 

Different from conventional MTL, multi-objective optimization (MOO) explicitly deals with problems bearing conflicting objectives (where improvements in one objective are accompanied by the worsening of another). MOO usually has a \emph{set} of Pareto optimal solutions, each embodying a different trade-off among the objectives \cite{ehrgott2005multicriteria}. Since in many cases preferred trade-offs are a priori unknown---and hence practitioners may be interested in viewing diverse solutions offering different trade-offs---it is often worthwhile to derive a representative subset of Pareto optimal solutions in MOO.

Given the above, \cite{sener2018multi} first formulated MTL as MOO, and extended the MGDA \cite{desideri2012multiple} to find a random Pareto optimal solution with less restriction. \cite{mahapatra2020multi} extended the work by developing an exact method to find the Pareto optimal solution satisfying user-specified preferences with respect to task-specific losses. Based on the multi-objective optimization method, \cite{lin2019pareto} and \cite{liu2021profiling} generalized the idea for finding a representative subset of well-distributed Pareto optimal solutions. As an alternative, researchers proposed methods to learn the whole Pareto front, either by generating locally continuous Pareto sets and Pareto fronts \cite{ma2020efficient}, training a hyper-network \cite{navon2021learning,lin2020controllable}, or incorporating preferences into the input space \cite{liu2021profiling}.

In the \emph{a posteriori} MOO literature, a common approach is to decompose the parent formulation into a collection of single-objective subproblems characterized by different reference directions---e.g., preference or weight vectors---in objective space \cite{zhang2007moea}. Similar subproblems, such as those scalarized by neighboring weight vectors, are intuitively expected to produce similar optima. In such settings, recent advances in gradient-free optimization have shown the potential to leverage latent relationships between distinct optimization tasks with different (but correlated) objective functions, with the goal of speeding up overall convergence rates \cite{bai2021multitask,luo2018evolutionary, bali2019multifactorial}. Algorithmic implementations of this idea, labeled as \emph{multi-task optimization} (MTO) \cite{swersky2013multi,gupta2015multifactorial, gupta2017insights}, point to a new box of tools with wide-ranging practical application, including in data science and machine learning pipelines, unmanned systems planning, complex design, among others \cite{gupta2022half}. It is therefore contended that even in the precinct of gradient-based learning, the transfer of information among MOO subproblems could substantially boost the search for Pareto MTL models.

\IEEEpubidadjcol

In this paper, we therefore propose a novel Pareto MTL algorithm from the multi-task optimization point of view, named \emph{Multi-Task Learning with Multi-Task Optimization} (MT$^2$O), that finds a representative set of Pareto optimized models in a single run. MT$^2$O first turns MTL into MOO, and then decomposes it into $N$ scalar unconstrained optimization subproblems. In contrast to \cite{lin2019pareto}, it solves these subproblems \emph{jointly} by a novel multi-task gradient descent method, where useful information is propagated among the subproblems through an iterative transfer mechanism. Our contributions in this paper are threefold:
\begin{itemize}
   \item We develop a new multi-task gradient descent method for MTL that can converge to a set of Pareto optimal models in one algorithmic pass. The models collectively embody multiple users' needs, beyond what can be captured by just a single model.
   \item The uniqueness of our approach lies in the iterative transfer of model parameters among MOO subproblems during the joint optimization run. Theoretical results showing faster convergence through the inclusion of such transfers are presented for the first time in the context of Pareto MTL. 
   \item Extensive experimental analysis and comparisons against several state-of-the-art baselines on various problem settings, including synthetic MOO problems, image classification, scene understanding (joint semantic segmentation, depth estimation, and surface normal estimation), and multi-target regression, confirm the efficacy and efficiency of the proposed MT$^2$O. 
\end{itemize} 

The rest of the paper is organized as follows. Section II introduces related works. Section III gives preliminaries used in this paper, including reframing MTL as MOO and  decomposition approaches to transform a MOO problem into a set of scalar optimization subproblems. The proposed MT$^2$O method for Pareto MTL with multi-task optimization is presented next in Section IV. In Section V, experiments on synthetic examples as well as real-world MTL datasets are conducted to rigorously evaluate performance. Finally, Section VI gives the conclusion.

\section{Background}

\subsection{Conflicts in Multi-Task Learning}
MTL solves multiple tasks together to improve learning efficiency and predictive accuracy, mainly by learning a shared representation from the data of related tasks \cite{zhang2021survey,ruder2017overview}. In practice, a unified objective function for MTL is constructed by weighting the empirical loss over all tasks through weights $w^i, i\in\{1,...,m\}$:
\begin{align}\label{mtl}
\theta^* = \text{argmin}_{\theta\in\mathbb R^d}\ \sum_{i=1}^m w^i L^i(\theta), 
\end{align}
where $L^i(\theta): \mathbb R^d\to \mathbb R$ is the loss of the $i$-th task, $d$ is the number of model parameters, and $m$ is the number of tasks. However, when the tasks are not highly related, which may be observed through conflicting or dominating gradients \cite{zhang2021survey,yu2020gradient,liu2021conflict,vandenhende2021multi}, directly optimizing \eqref{mtl} using gradient descent may significantly compromise the optimization of individual task's loss. 
To solve this problem, previous works have attempted to use random weighting methods \cite{lin2022reasonable}, adaptively re-weight the losses of each task \cite{kendall2018multi,ye2021multi}, or seek a better update vector by manipulating the task gradients \cite{chen2018gradnorm,sener2018multi,yu2020gradient,liu2021conflict}. However, prior works often lack convergence guarantees or return solutions with no controlled trade-offs between the conflicting tasks. As a result, these methods may optimize one loss while overlooking the others. When conflicts exist between task-specific objectives, no single solution can achieve the best performance across all tasks. Different trade-offs will produce different optimal solutions. As such, the multitude of possible trade-offs between the tasks can not be readily captured by traditional MTL methods.

\subsection{Multi-Objective Optimization}

MOO is the study of problems with conflicting objectives \cite{marler2004survey,miettinen2012nonlinear,deb2014multi}. It is usually formulated as follows:
\begin{align}\label{moo}
\theta^* = \text{argmin}_{\theta\in\mathbb R^d} {\bf L}(\theta) = (L^1(\theta), L^2(\theta),...,L^m(\theta)),
\end{align}
where ${\bf L}(\theta)$ is vector-valued with $m$ objectives. Given the existence of conflicts and (hence) trade-offs among the objectives, it produces a set of so-called Pareto optimal solutions. For completeness, we provide below the definitions of basic concepts of Pareto dominance, Pareto optimality, Pareto sets, and Pareto fronts, common in MOO \cite{zitzler1999multiobjective}:
\begin{definition}[Pareto Dominance]
A solution $\theta^a$ is said to Pareto dominate solution $\theta^b$ if $L^i(\theta^a)\leq L^i(\theta^b)$, $\forall i\in\{1,...,m\}$ and $\exists j\in\{1,...,m\}$ such that $L^j(\theta^a)<L^j(\theta^b)$.
\end{definition}
\begin{definition}[Pareto Optimality]
A solution $\theta^*$ is called Pareto optimal if there exists no solution $\theta$ that Pareto dominates $\theta^*$. 
\end{definition}
\begin{definition}[Pareto Set]
The set of all Pareto optimal solutions forms the Pareto set. 
\end{definition}
\begin{definition}[Pareto Front]
The image of the Pareto set in the objective function space is called the Pareto front.
\end{definition}

A variety of a posteriori methods for tackling MOO with population-based evolutionary algorithms have been proposed over the years, spanning dominance-based methods \cite{deb2000fast,yuan2015new}, indicator-based approaches \cite{falcon2020indicator}, and decomposition-based techniques \cite{zhang2007moea}. A distinctive advantage of evolution for problems of this kind is the implicit parallelism of its population, enabling the algorithm to sample and evaluate diverse regions of the search space, and hence converge simultaneously towards a representative subset of Pareto optimal solutions \cite{gupta2019back}. Note that typical gradient-based learning---which initializes and updates a single solution at a time---lacks such implicit parallelism. 
Of particular interest in this paper is the class of decomposition-based methods, popularized by Zhang and Li \cite{zhang2007moea}, where an MOO is first reduced to a number of scalar optimization subproblems. These can then be solved jointly by an evolving population of solutions, leading to relatively lower computational complexity than the independent optimization of each subproblem. However, in the case of large-scale MTL, the sampling and evaluation of a population of solutions can quickly become prohibitively expensive. Hence, in this paper, we envisage a novel multi-task gradient-descent method that can arrive at a diverse yet optimized set of MTL models in one algorithmic pass.

\subsection{Pareto Multi-Task Learning}

When there are conflicts among tasks in MTL, no single globally optimal model may exist. We thus aim to achieve a Pareto optimal model, whose performance vector lies in a preferred subregion of the Pareto front. Importantly, exact preference information may not be available beforehand. However, most traditional MTL methods (e.g., minimizing the weighted-sum of losses of all tasks) are restrictive in the sense that they can only produce solutions that map to convex parts of a possibly nonconvex front. To deal with this, the MTL was formulated as a MOO problem and the multiple gradient descent algorithm (MDGA) from \cite{desideri2012multiple} was extended to solve it in \cite{sener2018multi}. However, their method only finds a single Pareto optimal solution (i.e., parameters of a machine learning model), which may not precisely meet the needs of MTL practitioners. \cite{lin2019pareto} thus generalized the idea by decomposing the MOO problem into multiple multi-objective constrained subproblems with different trade-offs. They proposed a Pareto MTL algorithm to solve the subproblems independently, such that a set of well-distributed machine learning models with optimal trade-offs among the tasks could be obtained. Under specific preferences or priorities among the tasks, \cite{mahapatra2020multi} developed a method that can find a Pareto optimal solution satisfying the user-supplied preference with respect to task-specific losses. Given multiple preferences, a set of Pareto-optimized models was obtained. \cite{liu2021profiling} utilized Stein variational gradient descent (SVGD) to iteratively update a set of points towards the Pareto front while encouraging diversity among particles. As an alternative to the generation of finite and discrete solution sets, \cite{ma2020efficient} proposed a method that can generate locally continuous Pareto sets and Pareto fronts, providing a wider range of candidate solutions with varying trade-offs. To a similar end, \cite{navon2021learning} and \cite{lin2020controllable} proposed to train a hyper-network to learn the whole Pareto front, enabling controllable and on-demand generation of arbitrary numbers of candidate solutions. \cite{ruchte2021scalable} proposed a scalable multi-objective optimization method, which directly contextualizes network preferences into the input space to achieve well-distributed Pareto fronts.

\subsection{Multi-Task Optimization}
Multi-task optimization deals with solving multiple related optimization problems concurrently, often utilizing inter-task similarities to improve the optimization performance and efficiency. Suppose we have $N$ optimization tasks, each with an objective function parameterized by a task descriptor $\bm\lambda_i: f(\theta|\bm\lambda_i)\in\mathbb R$, $i\in\{1,...,N\}$. The descriptor $\bm\lambda_i$ specifies the parameters of task $i$. In addition, we assume that we have access to a meaningful similarity function quantifying the relationship between tasks $i$ and $j$ (e.g., the ordinal correlation between their objective functions \cite{zhou2018study}).
The workflow of multi-task optimization is then to jointly search for optimal values for each task:
\begin{align*}
 \min_{\theta} f(\theta|\bm\lambda_i),\ \forall i\in\{1,...,N\},
 \end{align*}
utilizing the inter-task similarity to speed up convergence rates across all. Hereinafter, we use notation $f_i(\theta) $ to depict $f(\theta|\bm\lambda_i)$ for simplicity.

To date, advances in multi-task optimization have mainly been in the domain of gradient-free evolutionary and Bayesian optimization \cite{swersky2013multi,bali2019multifactorial,gupta2022half}, where implicit transfer of useful knowledge is achieved through the process of evolution. By bringing the idea to gradient-based algorithms, \cite{bai2020multi} proposed the multi-task gradient descent algorithm to solve MTL. The method is further extended in \cite{bai2021multitask} to the multi-task evolution strategies for solving benchmark optimization problems as well as practical optimization problems. In this paper, we put forth a synergy of decomposition-based Pareto MTL with multi-task optimization, thus arriving faster at a representative subset of Pareto optimal models in only one algorithmic pass.

\section{Preliminaries}

\subsection{Casting Multi-Task Learning as Multi-Objective Optimization}
Consider a MTL problem over an input space $\mathcal X$ and a collection of task-specific output spaces $\{\mathcal Y^t\}_{t\in[m]}$, such that a large dataset of i.i.d. data points $\{x_k,y_k^1,...,y_k^m\}_{k\in [P]}$ is given, where $m$ is the number of tasks, $P$ is the number of data points, and $y_k^t$ is the label of the $t$-th task for the $k$-th data point. We consider a parametric hypothesis class per task as $g^t(x;\theta^{sh},\theta^t): \mathcal X\to\mathcal Y^t$ such that some parameters $\theta^{sh}$ are shared among all tasks and some parameters $\theta^t$ are task-specific. The task-specific loss function is then given by $L^t(\theta^{sh},\theta^t): \mathcal Y^t\times \mathcal Y^t\to\mathbb R$. Since some tasks could conflict with each other (i.e., improvements in one are accompanied by the worsening of another), it is possible formulate MTL in the form of MOO as follows:
\begin{align*}
\min_{[\theta^{sh},\theta^1,...,\theta^m] \in \mathbb R^d}  (L^1(\theta^{sh},\theta^1), L^2(\theta^{sh},\theta^2),...,L^m(\theta^{sh},\theta^m)).
\end{align*}
The goal is to find a collection of Pareto optimal solutions well distributed over the Pareto set, that can represent a variety of trade-offs among the tasks.

Note that since parameters $\theta^t$ have no influence on other tasks' loss functions, for simplicity, we drop the task-specific parameter terms in the MOO formulation and use $\theta$ instead of $\theta^{sh}$ to represent the shared parameters. The problem statement is then simplified as:
\begin{align}\label{mop}
\min_{\theta \in \mathbb R^d}  (L^1(\theta), L^2(\theta),...,L^m(\theta)).
\end{align}

\subsection{Recasting Multi-Objective Optimization as Multi-Task Optimization}
Decomposition is a basic strategy to tackle MOO problems. It aggregates different objective functions into a scalar-valued objective using a weight vector. The solution of the resulting single-objective optimization problem gives one Pareto optimal solution. Using different weight vectors, a set of different Pareto optimal solutions can thus be found. There are several approaches to transform a MOO problem into a number of scalar optimization subproblems. In what follows, we introduce two common approaches, the weighted-sum and the Tchebycheff scalarization.

Let $\bm{\lambda} = (\lambda^1,...,\lambda^m)^T$ be a weight vector that satisfies $\lambda^j\geq 0$ for $j=1,...,m$. 
The weighted-sum aggregates different objectives using a linear combination, such that the resulting scalar-valued objective function is:
\begin{align}\label{ws}
f^{ws}(\theta|\bm\lambda) = \sum_{j=1}^m\lambda^jL^j(\theta).
\end{align}
The weighted-sum approach works well for MOO with convex shaped Pareto fronts. However, it can not find solutions located in the non-convex parts of the front \cite{boyd2004convex}. In contrast, the Tchebycheff approach minimizes the following scalar-valued objective function:
\begin{align}\label{te}
f^{te}(\theta|\bm\lambda) = \max_{1\leq j\leq m}\{\lambda^j|L^j(\theta)-z^{j*}|\}, 
\end{align}
where $z^*=(z^{1*},...,z^{m*})^T$ is a reference point with each component being $z^{j*} = \min \{L^j(\theta)\}$. 
This decomposition is able to support solutions located in non-convex parts of Pareto fronts as well. However, the scalarized objective is non-smooth, which makes direct application of gradient-based optimization infeasible. To address this issue, we introduce smoothing techniques to obtain an approximate but differentiable scalarization. Precisely, we use the $\alpha_s$-softmax function $\frac{\sum_{i=1}^mx_ie^{\alpha_s x_i}}{\sum_{i=1}^me^{\alpha_s x_i}}$ to approximate $\max_{1\leq i\leq m}(x_i)$ \cite{lange2014applications}, and the commonly used $\sqrt{x^2+\epsilon}$ to approximate $|x|$, thus resulting in the following smoothed function: 
\begin{align}\label{ate}
\nonumber&f^{st}(\theta|\bm{\lambda}) = \\
&\frac{\sum_{j=1}^m \lambda^j\sqrt{(L^j(\theta)-z^{j*})^2+\epsilon}e^{\alpha_s\lambda^j\sqrt{(L^j(\theta)-z^{j*})^2+\epsilon}}}{\sum_{j=1}^m e^{\alpha_s\lambda^j\sqrt{(L^j(\theta)-z^{j*})^2+\epsilon}}},
\end{align}
where $\alpha_s>0$ and $\epsilon>0$ are smoothing parameters. The lager $\alpha_s$ and the smaller $\epsilon$ are, the smaller the approximation error.

To generate a set of Pareto optimal solutions, different weight vectors $\bm\lambda$ can be used to form different subproblems. Let $\bm\lambda_1,...,\bm\lambda_N$ be a set of evenly spread weight vectors in the objective space. Then, the problem of finding a good representation of the Pareto set can be recast (following Section 2.4) as one of multi-task optimization with $N$ tasks. 

Intuition suggests that neighboring subproblems (those generated by neighboring weight vectors; e.g., problems $i$ and $j$ are neighboring if $\|\bm\lambda_i-\bm\lambda_j\|\leq \epsilon$ with $\epsilon$ being a positive threshold) are likely to have similar optimal solutions. Note that this intuition is widely used with success in the design of decomposition-based evolutionary algorithms \cite{zhang2007moea}. Solving the subproblems jointly via multi-task optimization could then boost overall convergence rates by transferring useful information across similar tasks.

With this intuition, finding a set of Pareto optimal models for MTL can be transformed into the following multi-task optimization setup:
\begin{align}\label{subp}
\min_{\theta_i\in\mathbb R^d} f_i(\theta_i), \ \forall i= 1,...,N,
\end{align}
where $f_i(\theta_i)$ represents $f^{ws}(\theta|\bm\lambda_i)$ or $f^{st}(\theta|\bm\lambda_i)$. A schematic of the transformation is given in Fig.~\ref{ProbReformulation} (a). Each subproblem is a task in multi-task optimization, composed of all the loss functions (tasks in MTL) and the weight vectors. Fig.~\ref{ProbReformulation} (b) illustrates this for the case of MTL over a pair of tasks. Using four weight vectors, four subproblems are produced and solved jointly, with knowledge transfers occurring between nearest subproblems. As a result, four Pareto optimal solutions can be derived, with different trade-offs among the loss functions of the two tasks.

\definecolor{tab_orange}{RGB}{255, 127, 14}
\definecolor{tab_blue}{RGB}{31, 119, 180}
\definecolor{tab_cyan}{RGB}{23, 190, 207}
\begin{figure}[!htb]
\begin{subfigure}{\linewidth}
\centering
\begin{tikzpicture}[remember picture,
recnode/.style={rectangle, rounded corners=0.1cm, draw, fill=tab_blue, minimum width=15mm, minimum height = 8mm, text=white, align = center}
]
\tikzstyle{every node}=[font=\small]
\tikzstyle{line} = [draw, -latex']
    \node[recnode] (mtl) at (5,10) {MTL \\ ($m$ tasks)};
    \node[recnode] (moo) at (5,8.3) {MOO \\ ($m$ objectives)};
    \node[recnode] (s1) at (2,6) {Subproblem 1};
    \node[recnode] (s2) at (5,6) {Subproblem 2};
    \node (si) at (6.5,6) {$\cdots$};
    \node (transi) at (6.5,5.4) {$\cdots$};
    \node[recnode] (sN) at (8,6) {Subproblem N};
    \node[recnode] (pos1) at (2,4.5) {Pareto \\ optimal \\ solution 1};
    \node[recnode] (pos2) at (5,4.5) {Pareto \\ optimal \\ solution 2};
    \node (posi) at (6.5,4.5) {$\cdots$};
    \node[recnode] (posN) at (8,4.5) {Pareto \\ optimal \\ solution N};
    \path[line] (mtl) -- node[text width=2.5cm,midway,align=center]{reformulation} (moo);
    \draw (moo) -- node[text width=2.5cm,midway,align=center]{decomposition} (5,7);
    \path[line] (5,7) -- ++(-3,0) -- (s1);
    \path[line] (5,7) -- (s2);
    \path[line] (5,7) -- ++(3,0) --(sN);
    \path[line] (s1) -- (pos1);
    \path[line] (s2) -- (pos2);
    \path[line] (sN) -- (posN);
    \draw[latex-latex,dashed] (2,5.2) -- node[text width=2.5cm,midway,align=center]{knowledge transfer} (5,5.2);
\end{tikzpicture}
\caption{}
\end{subfigure} 

\vspace*{10pt}

\begin{subfigure}{\linewidth}
\centering
\includegraphics[width = 0.9\linewidth]{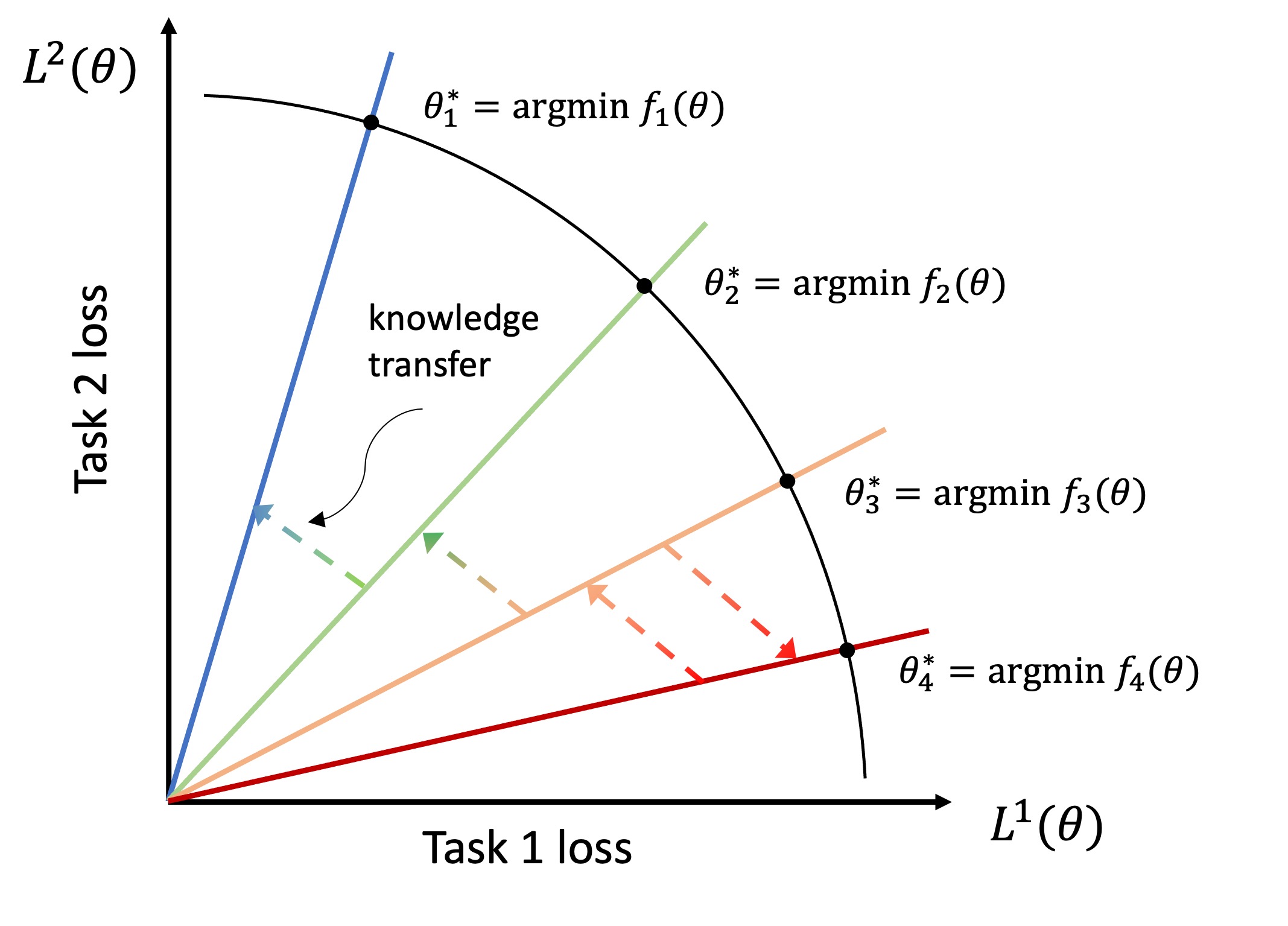}
\caption{}
\end{subfigure}
\caption{Finding a set of Pareto MTL models in one algorithmic pass by means of jointly solving  related subproblems with multi-task optimization. (a) Turning MTL into a set of subproblems. (b) Each subproblem provides one Pareto optimal model. Different Pareto optimal models embody different trade-offs among the tasks.}\label{ProbReformulation}
\end{figure}

\section{MT$^2$O and its Theoretical Analysis}

In this section, we present the Multi-Task Learning with Multi-Task Optimization (MT$^2$O) algorithm. As pointed out in Preliminaries, the MTL problem is first decomposed into $N$ scalar optimization subproblems representing different trade-offs among the tasks in the original MTL. Then, solving these subproblems simultaneously using a novel multi-task gradient descent method, a set of models embodying the different trade-offs can be obtained.

\subsection{Multi-Task Gradient Descent}

To simultaneously solve the $N$ optimization subproblems in \eqref{subp}, we propose a \emph{multi-task gradient descent} (MTGD) iteration to find the stationary points.

Under the assumption that there exist similarities among neighboring subproblems, the solution of one can facilitate the search for the solution of another. To harness these similarities, model parameters are iteratively transferred among the subproblems during the joint optimization process. The resulting MTGD iteration is:
\begin{align}\label{mgd}
\theta_i^{t+1} = \sum_{j=1}^N M_{ij}^t\theta_j^t-\alpha\nabla f_i(\theta_i^t),
\end{align}
where $t\in\mathbb N$ is the iteration index, $\nabla f_i(\theta_i^t)$ is the gradient of $f_i$ at $\theta_i^t$, and $M_{ij}^t$ is a $d \times d$ diagonal matrix representing the extent to which parameters transferred from subproblem $j$ inform the optimization of subproblem $i$. The transfer matrix is designed to satisfy the following conditions,
\begin{subequations}\label{m}
\begin{align}
&M_{ij,k}^t\geq0, \forall i, j= 1,...,N, k = 1,...,d, \label{m1}\\
&\sum_{j=1}^N M_{ij}^t=I_d,\label{m2}\\
&M_{ij,k}^t = 0,\ \text{for}\ t>T_0 \ \text{and}\  j\neq i,\label{m3}
\end{align}
\end{subequations}
where $M_{ij,k}^t$ is the $k$-th diagonal element in $M_{ij}^t$, which determines the transfer power of the $k$-th element in $\theta$. $T_0$ is a nonnegative integer. $I_d$ is the identity matrix with dimension $d\times d$. In the aforementioned conditions, \eqref{m1} implies that inter-task interactions are non-repulsive, \eqref{m2} imposes a sum-to-one normalization which limits the transfer power to prevent divergence, and \eqref{m3} implies that no transfer occurs after iteration $T_0$.

\subsection{Faster Convergence by Multi-task Transfer}

In this section, we theoretically analyze the convergence of the MTGD iteration for strongly convex and differentiable objective functions. Note that differentiability follows from the smoothening in (\ref{ate}). The objective functions of the subproblems $f_i: \mathbb R^d\to \mathbb R$ are also assumed to satisfy the following condition:
\begin{align*}
\xi_iI_d \leq \nabla^2 f_i(\theta_i)\leq L_{f_i}I_d,
\end{align*}
where $\xi_i$ and $L_{f_i}$ are positive constants satisfying $\xi_i\leq L_{f_i}$, and $\nabla^2f_i(\theta_i)$ is the Hessian of $f_i$ at $\theta_i$.

Under the conditions given in \eqref{m}, convergence of MTGD is assured as it falls back to pure gradient descent for $ t>T_0$. Further, suppose the transfer coefficients satisfy $M_{ij,k}^t = M_{ji,k}^t$, i.e., the transfer is symmetric, and $M_{ij}^t$ is chosen from a finite set. We show in \textbf{Theorem \ref{theo2}} below that under certain conditions, the induced transfer of parameter values among subproblems, as in (\ref{mgd}), accelerates convergence to a representative subset of Pareto MTL models. 

We first detail the terms and notations used in our derivation. Bold fonts depict concatenation. Specifically,
$\bm\theta = [\theta_1^{T}, ..., \theta_N^{T}]^T\in\mathbb R^{dN}$ where $\theta_i^T$ represents the transpose of $\theta_i$, $\theta_i^*$ denotes the optimal solution for subproblem $i$, $\bm\theta^* = [\theta_1^{*T}, ..., \theta_N^{*T}]^T\in\mathbb R^{dN}$, $\tilde {\bm\theta} = \bm\theta-\bm\theta^*$, and $\bm f(\bm\theta) = [ f_1(\theta_1),...,f_N(\theta_N)]^T\in\mathbb R^{N}$.
Let $\bar L_{f_i} = \max_i\{L_{f_i}\}$, $\underline {\xi_i} = \min_i\{\xi_i\}$. We also define $b_0 = \max_{i,j}\|\theta_i^*-\theta_j^*\|$, where $\|\cdot\|$ denotes the Euclidean norm,  $\eta_1=\max_t(\rho(\mathcal M^t-\alpha H^t))$ and  $\eta_2 = \max_t(\rho(I_{dT}-\alpha H^t))$. Here, $\rho(\cdot)$ represents the spectral radius of a matrix, $H^t = \int_0^1\nabla^2 \bm f(\bm\theta^*+\mu(\bm\theta^t-\bm\theta^*))d\mu\in\mathbb R^{dN\times dN}$ with $\nabla^2 \bm f(\bm\theta) = \text{diag}\{\nabla^2 f_1(\theta_1),...,\nabla^2 f_N(\theta_N)\}\in\mathbb R^{dN\times dN}$ where diag$\{\cdot\}$ is the operation that forms a block diagonal matrix with each element, and $\mathcal M^t$ is the matrix with its $i,j$-th block element being $M_{ij}^t$. Lastly, $H_i^t = \int_0^1\nabla^2 f_i(\theta_i^*+\mu(\theta_i^t-\theta_i^*))d\mu\in\mathbb R^{d\times d}$.

\begin{theorem}\label{theo2}
Suppose there exist $i$ and $j$ such that $H_i^t\neq H_j^t$ and the transfer coefficient $M^t_{ij}$ satisfies 
\begin{align*}
\nonumber &\sum_{j=1}^N M_{ij}^t = I_d,\\
&M_{ij,k}^t\geq 0,\ \forall i,j=1,...,N, k=1,...,d,\\
&M_{ij,k}^t = 0,\ \text{for}\ t>T_0 \ and\  j\neq i,\\
&M_{ii,k}^t\geq0.5,\\
&M_{ij,k}^t = M_{ji,k}^t
\end{align*}
where $T_0$ is a nonnegative integer satisfying
\begin{align}\label{InitialCond}
\eta_1^{T_0}\frac{\|\nabla \bm f(\bm \theta^0)\|}{\bar L_{f_i}}+\frac{1-\eta_1^{T_0}}{1+\eta_1}b_0< \eta_2^{T_0}\frac{\|\nabla \bm f(\bm\theta^0)\|}{\bar L_{f_i}},
\end{align}
then, under \eqref{mgd}, $\|\tilde {\bm\theta^t}\|$ converges to zero faster than when there is no transfer if $\exists\ T_0>0$ and the step size $\alpha$ satisfies
\begin{align}
0<\alpha<\frac{1}{2\bar L_{f_i}}.
\end{align}
\end{theorem}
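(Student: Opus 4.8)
The plan is to recast the coupled iteration (\ref{mgd}) in the concatenated coordinates and reduce everything to the spectrum of a single symmetric matrix per step. Stacking the $N$ updates gives $\bm\theta^{t+1} = \mathcal M^t\bm\theta^t - \alpha\nabla\bm f(\bm\theta^t)$, where $\nabla\bm f(\bm\theta^t)=[\nabla f_1(\theta_1^t)^T,\dots,\nabla f_N(\theta_N^t)^T]^T$. Since each $\theta_i^*$ is a stationary point of $f_i$, the fundamental theorem of calculus applied to $\nabla f_i$ gives $\nabla f_i(\theta_i^t)=H_i^t\,\tilde\theta_i^t$, hence $\nabla\bm f(\bm\theta^t)=H^t\tilde{\bm\theta}^t$ with $H^t$ block diagonal. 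First I would subtract $\bm\theta^*$ and use the block-row normalization $\sum_j M_{ij}^t=I_d$ to obtain the exact error recursion
\[
\tilde{\bm\theta}^{t+1} = (\mathcal M^t - \alpha H^t)\,\tilde{\bm\theta}^t + \bm c^t, \qquad \bm c^t = (\mathcal M^t - I_{dN})\bm\theta^*,
\]
whose $i$-th block equals $\sum_{j\neq i} M_{ij}^t(\theta_j^* - \theta_i^*)$. This drift $\bm c^t$ is the price of transferring between subproblems with distinct optima; I would bound it using $\|\theta_j^*-\theta_i^*\|\le b_0$ together with $\sum_{j\neq i}M_{ij,k}^t = 1-M_{ii,k}^t\le 0.5$, yielding a per-step drift of order $b_0$ that vanishes identically once $t>T_0$, after which (\ref{mgd}) is plain parallel gradient descent.

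The second step converts spectra into norms. Because $M_{ij}^t=M_{ji}^t$ with diagonal blocks, $\mathcal M^t$ is symmetric, and $H^t$ is symmetric as a block diagonal of Hessians, so both $\mathcal M^t-\alpha H^t$ and $I_{dN}-\alpha H^t$ are symmetric and their operator norms coincide with their spectral radii $\le\eta_1$ and $\le\eta_2$. Taking norms gives $\|\tilde{\bm\theta}^{t+1}\|\le \eta_1\|\tilde{\bm\theta}^t\|+\|\bm c^t\|$ with transfer, versus $\|\tilde{\bm\theta}^{t+1}\|\le\eta_2\|\tilde{\bm\theta}^t\|$ without. The crux, and what I expect to be the main obstacle, is proving $\eta_1<\eta_2$. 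Here the hypotheses conspire: diagonal dominance $M_{ii,k}^t\ge 0.5$ with symmetry and $\sum_j M_{ij}^t=I_d$ force $0\preceq\mathcal M^t\preceq I_{dN}$; eigenvalue monotonicity then yields $\lambda_{\max}(\mathcal M^t-\alpha H^t)\le\lambda_{\max}(I_{dN}-\alpha H^t)$, and the inequality is \emph{strict} precisely because some $H_i^t\neq H_j^t$, so the off-diagonal mixing of two unequal diagonal blocks strictly lowers the top eigenvalue (transparent already in the $2\times2$, $d=1$ case, where the reduction equals $\sqrt{a^2+b^2}-(a+b)<0$ for off-diagonal weight $a>0$ and Hessian gap $b>0$). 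Meanwhile $\alpha<1/(2\bar L_{f_i})$ together with $\mathcal M^t\succeq 0$ forces $\lambda_{\min}(\mathcal M^t-\alpha H^t)\ge-\alpha\bar L_{f_i}>-\eta_2$, so no negative eigenvalue can govern the spectral radius; and because $M^t_{ij}$ is drawn from a finite set, the per-step strict inequalities upgrade to a uniform gap $\eta_1<\eta_2$.

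Finally I would unroll both recursions from the shared start $\bm\theta^0$ up to $t=T_0$. The no-transfer branch contracts as $\eta_2^{T_0}\|\tilde{\bm\theta}^0\|$, while the transfer branch combines its faster $\eta_1^{T_0}$ contraction with a geometric sum of the $O(b_0)$ drift, producing the second summand of (\ref{InitialCond}); relating the initial error to the gradient via $\underline{\xi_i}I_d\preceq H^0\preceq\bar L_{f_i}I_d$, so that $\|\tilde{\bm\theta}^0\|$ is expressed through $\|\nabla\bm f(\bm\theta^0)\|/\bar L_{f_i}$, rewrites both bounds as the two sides of (\ref{InitialCond}). Thus (\ref{InitialCond}) is exactly the assertion that by iteration $T_0$ the transfer error bound already undercuts the no-transfer one; since the drift collapses for $t>T_0$ and both schemes then reduce to the identical gradient descent, the lead acquired by step $T_0$ is preserved thereafter, so $\|\tilde{\bm\theta}^t\|$ tends to zero faster with transfer. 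The existence of an admissible $T_0>0$ is then guaranteed by $\eta_1<\eta_2$ whenever the initial error is large relative to the spread $b_0$ of the subproblem optima—exactly the regime that (\ref{InitialCond}) delineates, and consistent with the intuition that transfer pays off early and should be switched off as the models approach their respective optima.
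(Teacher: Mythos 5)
Your proposal is correct and follows essentially the same route as the paper's own proof: the same concatenated error recursion $\tilde{\bm\theta}^{t+1} = (\mathcal M^t - \alpha H^t)\tilde{\bm\theta}^t + (\mathcal M^t - I_{dN})\bm\theta^*$, the same spectral comparison via Weyl-type eigenvalue monotonicity with the positive semidefinite gap $I_{dN}-\mathcal M^t$ (strictness from $H_i^t\neq H_j^t$, and the spectral radius pinned to the top eigenvalue by $M_{ii,k}^t\geq 0.5$ and $\alpha<1/(2\bar L_{f_i})$), and the same unrolling to $T_0$ with the Lipschitz bound $\|\tilde{\bm\theta}^0\|\geq\|\nabla\bm f(\bm\theta^0)\|/\bar L_{f_i}$ turning condition \eqref{InitialCond} into the desired comparison, after which both schemes coincide with plain gradient descent.
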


\begin{proof}
Writing \eqref{mgd} into a concatenated form gives
\begin{align*}
\bm\theta^{t+1} = \mathcal{M}^t\bm\theta^t-\alpha\nabla \bm f(\bm\theta^t).
\end{align*}
Subtracting $\bm\theta^*$ from both sides of the above equation, we have
\begin{align*}
\tilde {\bm\theta}^{t+1} =& \mathcal{M}^t\tilde {\bm\theta}^t+(\mathcal{M}^t-I_{dN}){\bm\theta}^*-\alpha(\nabla \bm f({\bm\theta}^t)-\nabla \bm f({\bm\theta}^*))\\
=& (\mathcal{M}^t-\alpha H^t)\tilde{\bm\theta}^t+(\mathcal{M}^t-I_{dN}){\bm\theta}^*\\
=& A_m^t\tilde {\bm\theta}^t+(\mathcal{M}^t-I_{dN}){\bm\theta}^*,
\end{align*}
where $A_m^t = \mathcal{M}^t-\alpha H^t$. Due to the assumption that $\mathcal{M}^t$ is symmetric, $A_m^t$ is symmetric. Thus,
\begin{align}\label{the2ite}
\nonumber\|\tilde {\bm\theta}^{t+1}\|\leq &\|A_m^t\tilde {\bm\theta}^t\|+\|(\mathcal{M}^t-I_{dN}){\bm\theta}^*\|\\
\nonumber\leq &\rho(A_m^t)\|\tilde {\bm\theta}^t\|+\|(\mathcal{M}^t-I_{dN}){\bm\theta}^*\|\\
\nonumber\leq & \prod_{\tau=0}^t\rho(A_m^{\tau})\|\tilde {\bm\theta}^0\|+\|(\mathcal{M}^t-I_{dN}){\bm\theta}^*\|\\
&+\sum_{\tau=0}^{t-1}\prod_{r=\tau+1}^{t}\rho(A_m^r)\|(\mathcal{M}^{\tau}-I_{dN}){\bm\theta}^*\|.
\end{align}

Denote ${\bm\theta}_s$ the variable when there is no transfer. Then, the iteration of ${\bm\theta}_s$ is
\begin{align*}
\tilde {\bm\theta}_s^{t+1} =& {\bm\theta}_s^t-{\bm\theta}^*-\alpha\nabla \bm f({\bm\theta}_s^t)\\
=&(I_{dN}-\alpha H^t)\tilde {\bm\theta}_s^t. 
\end{align*}
It can be derived that
\begin{align}\label{ites}
\|\tilde {\bm\theta}_s^{t+1}\|\leq \prod_{\tau=0}^t\rho(A_s^{\tau})\|\tilde {\bm\theta}^0\|,
\end{align}
where $A_s^t=I_{dN}-\alpha H^t$.

Let $\Delta^t = I_{dN}-\mathcal{M}^t$, it is obvious that
\begin{align}\label{rela_matrix}
A_s^t = A_m^t+\Delta^t.
\end{align}
Let $\lambda_m(B)$ be the decreasing ordered eigenvalues of matrix $B\in\mathbb R^{dN\times dN}$, i.e., $\lambda_1(B)\geq\lambda_{2}(B)\geq\cdots\geq\lambda_{dN}(B)$. Let $\bm r_i$ be the right eigenvector of $\mathcal{M}^t$ corresponding to eigenvalue $\lambda_i(\mathcal{M}^t)$, i.e., $\mathcal{M}^t \bm r_i = \lambda_i(\mathcal{M}^t)\bm r_i$. Then, $\bm r_i$ is also a right eigenvector of $\Delta^t$ corresponding to eigenvalue $1-\lambda_i(\mathcal{M}^t)$, which can be obtained by the following relationship,
\begin{align*}
\Delta^t\bm r_i = (I_{dN}-\mathcal{M}^t)\bm r_i = \bm r_i-\lambda_i(\mathcal{M}^t)\bm r_i = (1-\lambda_i(\mathcal{M}^t))\bm r_i.
\end{align*}
Thus, the eigenvalues of matrix $\Delta^t$ is $\lambda_1(\Delta^t) = 1-\lambda_{dN}(\mathcal{M}^t)$ and $\lambda_{dN}(\Delta^t) = 1-\lambda_1(\mathcal{M}^t)=0$. As a result, $\Delta^t$ is positive semidefinite. Since $A_s^t, A_m^t, \Delta^t$ are all Hermitian matrix and $\Delta^t\geq 0$, from Weyl's theorem \cite{horn1990matrix}, we have
\begin{align*}
\lambda_i(A_m^t)\leq \lambda_i(A_s^t), i=1,...,dN.
\end{align*}
The equality holds if and only if there is a nonzero vector $\bm x$ such that $A_m^t\bm x = \lambda_i(A_m^t)\bm x$, $\Delta^t\bm x=0$, and $A_s^t\bm x = \lambda_i(A_s^t)\bm x$. For $\Delta^t\bm x = 0$, we have $\bm x = \mathcal{M}^t\bm x$, which indicates that $\bm x$ is in the space spanned by the columns of $\bm 1_N\otimes I_d$ for $M^t\neq I_N$, where $\bm 1_N$ represents a $N$ dimensional column vector with each component being 1 and $\otimes$ is the Kronecker product. $A_s^t\bm x = \lambda_1(A_s^t)\bm x$ indicates $\bm x-\alpha H^t\bm x = \lambda_1(A_s^t)\bm x$, which gives $\alpha H^t\bm x = (1-\lambda_1(A_s^t))\bm x$. Since there exist $i,j$ such that $H_i^t\neq H_j^t$, such $\bm x$ does not exist. As a result, $\lambda_1(A_m^t)<\lambda_1(A_s^t)$. 

From the relation $A_m^t = \mathcal{M}^t-\alpha H^t$ and the Weyl's theorem, we have
\begin{align*}
\lambda_i(\mathcal{M}^t)+\lambda_{dN}(-\alpha H^t)\leq\lambda_i(A_m^t)\leq \lambda_i(\mathcal{M}^t)+\lambda_1(-\alpha H^t).
\end{align*}
Since $\xi_iI_d\leq\nabla^2 f_i(\theta_i)\leq L_{f_i}I_d$, $\underline{\xi_i}\leq\lambda_i(H^t)\leq \bar L_{f_i}$. From the stochastic property of $\mathcal{M}^t$, $\lambda_1(\mathcal{M}^t)=1$ and $\lambda_{dN}(\mathcal{M}^t)\geq-1$. Thus
\begin{align*}
1-\alpha \bar L_{f_i}\leq&\lambda_1(A_m^t)\leq 1-\alpha\underline{\xi_i},\\
\lambda_{dN}(\mathcal{M}^t)-\alpha \bar L_{f_i}\leq&\lambda_{dN}(A_m^t)\leq \lambda_{dN}(\mathcal{M}^t)-\alpha\underline{\xi_i}.
\end{align*}
Under the following conditions
\begin{align*}
\alpha < \frac{1}{2\bar L_{f_i}},\\
M_{ii,k}^t\geq0.5,
\end{align*}
we have $2\alpha \bar L_{f_i}-1<0$ and $\lambda_{dN}(\mathcal{M}^t)\geq 0$. Thus,
\begin{align*}
|\lambda_N(M^t)-\alpha \bar L_{f_i}|<1-\alpha \bar L_{f_i}.
\end{align*}
As a result, $\rho(A_m^t) = \max\{|\lambda_1(A_m^t)|,|\lambda_{dN}(A_m^t)|\} = \lambda_1(A_m^t)$. Together with the result that $\lambda_1(A_m^t)<\lambda_1(A_s^t)$, we can conclude that $\rho(A_m^t)<\rho (A_s^t)$ for $\mathcal{M}^t\neq I_{dN}$.

Since $\eta_1=\max_t(\rho(A_m^t))$ and $\eta_2 = \max_t(\rho(A_s^t))$, we have $\eta_1<\eta_2$. From iteration \eqref{the2ite}, we have
\begin{align*}
\|\tilde {\bm\theta}^{T_0}\|\leq\eta_1^{T_0}\|\tilde {\bm\theta}^0\|+\frac{1-\eta_1^{T_0}}{1+\eta_1}b_0.
\end{align*}
From \eqref{ites}, we have
\begin{align*}
\|\tilde {\bm\theta}_s^{T_0}\|\leq \eta_2^{T_0}\|\tilde {\bm\theta}^0\|.
\end{align*}

From the Lipschitz continuous assumption on $\nabla f_i$, we have $\|\tilde {\bm\theta}\|\geq \|\nabla \bm f({\bm\theta})\|/\bar L_{f_i}$. Thus
\begin{align*}
\eta_1^{T_0}+\frac{(1-\eta_1^{T_0})b_0}{(1+\eta_1)\|\tilde {\bm \theta}^0\|}\leq \eta_1^{T_0}+\frac{(1-\eta_1^{T_0})b_0\bar L_{f_i}}{(1+\eta_1)\|\nabla \bm f({\bm\theta}^0)\|}.
\end{align*}
By setting $T_0$ to satisfy
\begin{align*}
\eta_1^{T_0}+\frac{(1-\eta_1^{T_0})b_0\bar L_{f_i}}{(1+\eta_1)\|\nabla \bm f({\bm\theta}^0)\|}\leq \eta_2^{T_0},
\end{align*}
we have
\begin{align*}
\eta_1^{T_0}\|\tilde {\bm\theta}^0\|+\frac{1-\eta_1^{T_0}}{1+\eta_1}b_0< \eta_2^{T_0}\|\tilde {\bm\theta}^0\|,
\end{align*}
indicating that $\|\tilde {\bm\theta}^{T_0}\|$ possesses a tighter upper bound compared to $\|\tilde {\bm\theta}_s^{T_0}\|$. After $T_0$, MTGD falls back to pure gradient descent. As a result, MTGD converges faster to the optimal solution than the single task gradient descent.
\end{proof}

\subsection{Summary of the Proposed MT$^2$O}

A pseudocode of the proposed algorithm is shown in Algorithm~\ref{alg}. The output of MT$^2$O is a representative subset of Pareto optimal models by solving \eqref{mop} via decomposition. Step 9 is our core contribution, where model parameters are updated in a gradient descent direction whilst also utilizing transferred parameter values from neighboring subproblems. Here, $M^t_{ij}\in\mathbb R^{d\times d}$ is a diagonal matrix mandating the extent of transfer from subproblem $j$ to subproblem $i$. The $k$-th diagonal element in $M_{ij}^t$ is the transfer coefficient for the $k$-th element of the variables.
\begin{algorithm}[tb]
   \caption{Pseudocode of the Proposed MT$^2$O}
   \label{alg}
\begin{algorithmic}[1]
   \STATE {\bfseries Input:} 
\begin{enumerate}
  \item MTL problem \eqref{mop}
  \item $N$: number of the subproblems
  \item $\bm\lambda_1$,...,$\bm\lambda_N$: a unified spread of $N$ weight vectors
  \item $\alpha$: learning rate
\end{enumerate}
   \STATE {\bfseries Output:} A set of solutions $\theta_i^*, i\in\{1,...,N\}$:\\
$$\theta_i^* = \text{argmin}_{\theta} f_i(\theta)$$
   \STATE Generate an initial $\theta_i^0$ for $i\in\{1,...,N\}$.
   \STATE Set $t=0$
   \WHILE{stopping criterion is not met}
   \STATE Calculate the transfer matrix $M^t_{ij}$ for $i,j\in\{1,...,N\}$.
   \FOR {i = 1,...,N}
   \STATE Calculate gradients $\nabla f_i(\theta_i)$
   \STATE $\theta_i^{t+1} = \sum_{j=1}^NM_{ij}^t\theta_i^t-\alpha\nabla f_i(\theta_i)$
   \ENDFOR
   \STATE $t = t+1$
   \ENDWHILE
\end{algorithmic}
\end{algorithm}

With the intuition that the smaller the distance between a pair of weight vectors the more strongly correlated the corresponding subproblems, a scalar transfer coefficient between subproblems $i$ and $j$ can be defined based on the Euclidean distance between $\bm\lambda_i$ and $\bm\lambda_j$. Note that more sophisticated transfer coefficients could also be defined, such as element-wise and adaptive transfer coefficients, as long as the conditions in Theorem~1 are satisfied.


\subsection{Time Complexity}
When using predefined transfer coefficients, the time complexity of Algorithm~1 mainly comes from the updates of the model parameters. Since each $f_i$ is composed of $m$ task losses, the calculation of gradients in one subproblem leads to a complexity of $O(dm)$. The transfer leads to a complexity of $O(dN)$. Therefore, the overall complexity of the proposed MT$^2$O in one iteration and one subproblem is of order $O(d(m+N))$, which scales linearly with the parameter dimension $d$, number of tasks $m$, and the number of candidate solutions $N$.

\section{Experiments}
In this section, we first conduct experiments on synthetic examples to demonstrate the effectiveness and efficiency of our MT$^2$O algorithm. Then, we perform real-world MTL experiments on various types of learning tasks with different task numbers, including image classification, data regression, and hybrid classification and estimation.

We compare MT$^2$O with the following classical and state-of-the-art MTL algorithms: 
{\bf 1) Linear Scalarization (LS) \cite{gunantara2018review}}: minimize the weighted-sum of different tasks; {\bf 2) Pareto MTL (PMTL)} \cite{lin2019pareto}: find a set of Pareto solutions using multiple gradient descent with constraints; {\bf 3) Exact Pareto Optimal (EPO)} \cite{mahapatra2020multi}: find a set of Pareto solutions with user preferences; {\bf 4) MTL as Multi-Objective Optimization (MTLMOO)} \cite{sener2018multi}: find one arbitrary Pareto solution using multiple gradient descent; {\bf 5) Pareto HyperNetwork (PHN)} \cite{navon2021learning}: train a hypernetwork to learn the Pareto front; {\bf 6) MOO using Stein variational gradient descent (MOOSVGD)} \cite{liu2021profiling}: update a set of points towards the Pareto front while encouraging diversity among particles; and {\bf 7) Conditioned one-shot multi-objective search (COSMOS)} \cite{ruchte2021scalable}: contextualize network preferences into the input space to achieve well-distributed Pareto fronts. We run the experiments based on open-sourced codes for comparing algorithms PMTL, MTLMOO, EPO \footnote{https://github.com/dbmptr/EPOSearch}, PHN \footnote{https://github.com/AvivNavon/pareto-hypernetworks}, MOOSVGD \footnote{https://github.com/gnobitab/MultiObjectiveSampling}, and COSMOS \footnote{https://github.com/ruchtem/cosmos}. Note that when using weighted-sum as the decomposition method and there is no transfer among the subproblems, MT$^2$O is identical to LS.


For simplicity, we refer to both the preference vector and weight vector as the reference vector in the following discussion. In our proposed method, MT$^2$O, we assess subproblem similarity based on the Euclidean distances between reference vectors, except for the CelebA dataset experiment, which employs cosine similarities of the labels (further details to follow). Let $J$ represent the neighborhood size. The calculation of transfer coefficients $M_{ij,k}, j\in{1,...,N}, k\in{1,...,d}$ proceeds as follows: First, we compute the Euclidean distances between all reference vectors. Next, for each reference vector, we identify the closest $J$ reference vectors, including the reference vector itself. Subsequently, we set the transfer coefficient between subproblem $i$ and its nearest neighbor as $\frac{J}{2(1+...+J)}+\frac{1}{2}$, the second nearest as $\frac{J-1}{2(1+...+J)}$, the third nearest as $\frac{J-2}{2(1+...+J)}$, and so forth. Finally, transfer coefficients for the remaining $N-J$ subproblems are assigned zero. The hyperparameters, including $T_0$ and $J$, are determined empirically. We set $T_0$ based on the insights from Theorem~1, where smaller values are required for variables closer to optimal solutions. Consequently, we opt for relatively small $T_0$ values: 10 in synthetic examples and 30 in multi-task learning tasks. The choice for $J$ aligns with the number of tasks since more tasks entail a greater number of subproblems, hence necessitating a larger neighborhood size. Additionally, other hyperparameters adhere to established methodologies. Specifically, in learning tasks, we employ the SGD with a learning rate of $\alpha = 1e-3$ in methods except PHN, where default parameters are utilized. We acknowledge that further hyperparameter tuning, like grid search, could enhance performance. However, given the primary focus of this paper on multi-task optimization, we omitted this step in the current version.

In addition to training losses and testing accuracies, we also use Hypervolume (HV) to evaluate the obtained Pareto solutions. HV is strictly monotonic with regard to Pareto dominance and can measure the proximity to the Pareto front and diversity simultaneously \cite{zitzler1999multiobjective}. For a given set of points $S\in\mathbb R^d$ and a reference point $r\in\mathbb R^d_+$, the HV of $S$ is measured by the region of non-dominated points in $S$ bounded above by $r$. The larger the HV value is, the better the solution is.

\subsection{Synthetic Examples}

To better understand the behaviors of the algorithms, we first test the algorithms on three synthetic MOO problems. Problem one (P1) is from \cite{lin2019pareto}, which has a concave Pareto front, and the other two problems are ZDT1 and ZDT2 \cite{zitzler2000comparison}, which are commonly used MOO benchmarks. In all problems, a two-objective optimization problem $\min_{\theta\in\mathbb R^d} {\bf L}(\theta) = (L^1(\theta), L^2(\theta))$ is solved, where $d$ is set to 20. 
The objective functions in P1, ZDT1, and ZDT2 are listed below:

P1:
\begin{align*}
L^1(\theta) = 1-e^{-\|\theta-\frac{1}{\sqrt{d}}\|^2},\ L^2(\theta) = 1-e^{-\|\theta+\frac{1}{\sqrt{d}}\|^2}
\end{align*}

ZDT1:
\begin{align*}
L^1(\theta) = \theta_1, \ L^2(\theta) = g(\theta)[1-\sqrt{\theta_1/g(\theta)}]
\end{align*}

ZDT2:
\begin{align*}
L^1(\theta) = \theta_1, \ L^2(\theta) = g(\theta)[1-(\theta_1/g(\theta))^2]
\end{align*}
with $g(\theta) = 1+\frac{9}{d-1}\sum_{i=2}^d\theta_i$ and $\theta_i\in[0,1]$ for ZDT1 and ZDT2.

We run all the algorithms with 10 evenly distributed reference vectors. The initial values for the subproblems are evenly distributed within the design space. The step size is set to 1 for P1 and 0.3 for ZDT1 and ZDT2, which are found empirically to be near optimum across all architectures. Since the Pareto fronts of P1 and ZDT2 are concave, we use the smoothed Tchebycheff approach to decompose the MOO into subproblems in MT$^2$O. The smooth parameters are set as $\alpha_s = 5$ and $\epsilon = 0.05$ by experience, and the reference point $z^*$ can be calculated directly from the analytical objective functions. In PHN and COSMOS, based on \cite{navon2021learning}, a two-layer multilayer perceptron (MLP) with 50 hidden units on each layer is used as the Hypernetwork or incorporate the reference vectors into the input space, and the parameter of Dirichlet distribution is set to 0.2. PHN and COSMOS are run for 500 iterations, and other methods are run for 50 iterations for each reference vector. To compare the performance of the algorithms, we scale the iterations of PHN and COSMOS to 50 and plot the HV value convergence in one figure. Thirty independent runs are conducted to produce reliable performance statistics.

\begin{figure*}[!htb]
\centering
 \begin{subfigure}[b]{0.32\linewidth}
    \includegraphics[width=\textwidth]{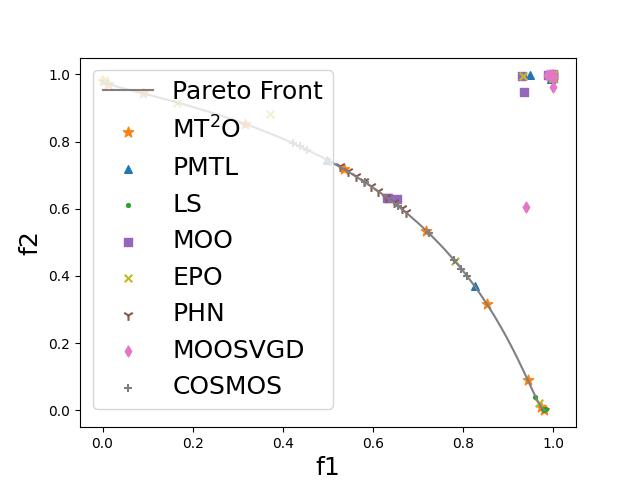}
  \end{subfigure} 
  \begin{subfigure}[b]{0.32\linewidth}
    \includegraphics[width=\textwidth]{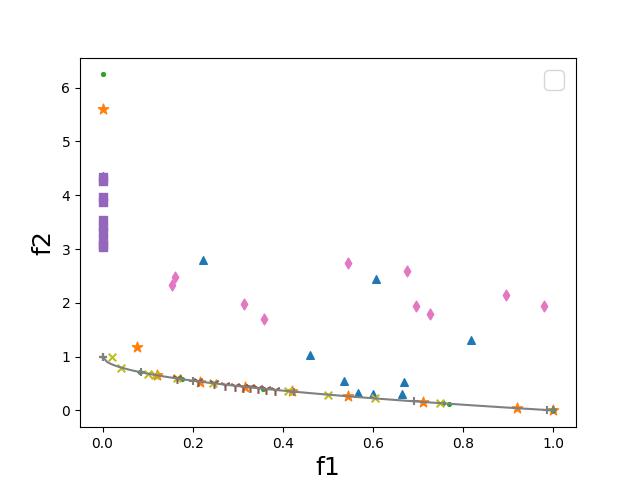}
  \end{subfigure} 
   \begin{subfigure}[b]{0.32\linewidth}
    \includegraphics[width=\textwidth]{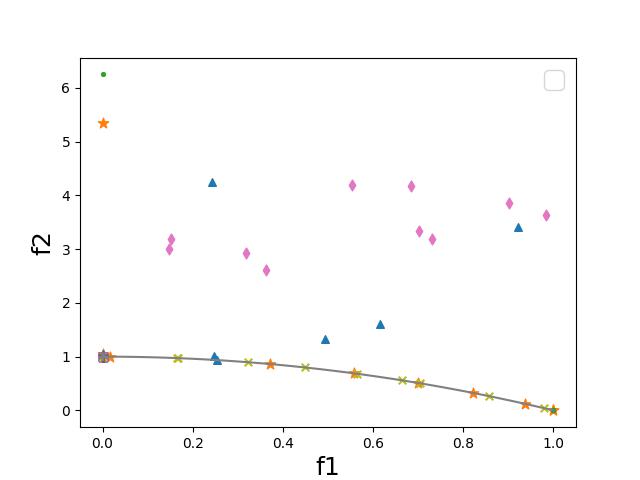}
  \end{subfigure} 
  \begin{subfigure}[b]{0.32\linewidth}
    \includegraphics[width=\textwidth]{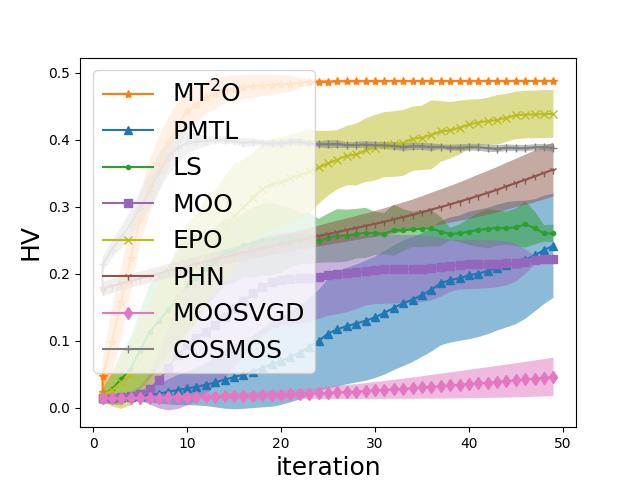}
    \caption{P1}
  \end{subfigure} 
  \begin{subfigure}[b]{0.32\linewidth}
    \includegraphics[width=\textwidth]{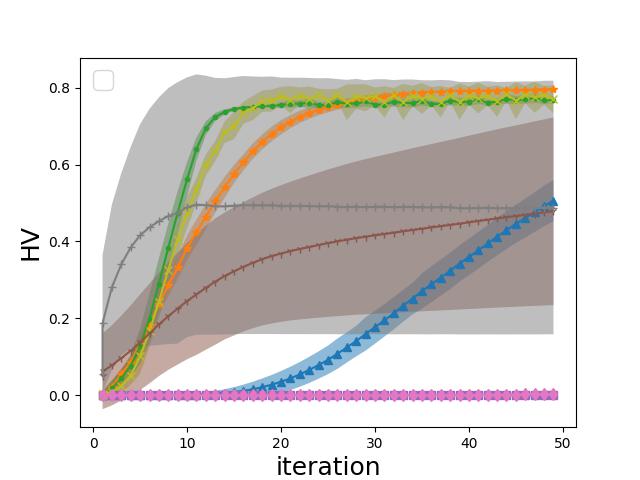}
    \caption{ZDT1}
  \end{subfigure} 
   \begin{subfigure}[b]{0.32\linewidth}
    \includegraphics[width=\textwidth]{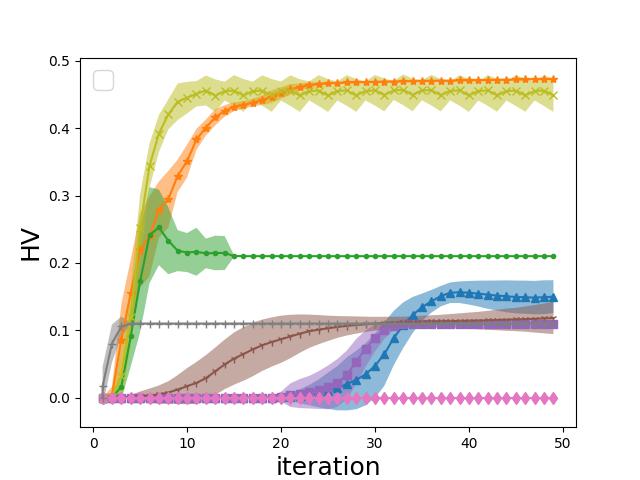}
    \caption{ZDT2}
  \end{subfigure} 
  \caption{The results for the synthetic examples. The top row shows the approximated Pareto front of the first run, and the bottom row shows the HV value convergence curves during optimization, calculated using the reference point (1.1,1.1). The HV values are averaged over 30 runs.}\label{synthetic}
\end{figure*}

The results obtained by different algorithms are shown in Fig.~\ref{synthetic}. The top row shows the obtained Pareto solutions of the first run, and the bottom row shows the averaged HV value during the optimization process. As can be seen, the proposed MT$^2$O using approximated Tchebycheff decomposition can successfully find a set of well-distributed Pareto solutions with different trade-offs. Notably, in P1 and ZDT2, since the Pareto fronts are concave, we can observe that only extreme solutions can be found by LS. MTLMOO can only find
solutions having similar trade-offs in all problems since it does not consider the reference vector information. PMTL, EPO, and COSMOS have large standard deviations as shown in Fig.~\ref{synthetic}~(a) implying that the methods are sensitive to the initial values of the variables. Since PHN and COSMOS train a single hypernetwork for all reference vectors, we optimized it for a higher number of iterations than the other methods. However, without substantial effort to fine-tune the hyperparameters, it was found to be very difficult to achieve satisfying solutions.

To show that transfer among subproblems indeed accelerates convergence, we compare the HV value obtained for the cases of with and without transfer in MT$^2$O, while using Tchebycheff decomposition scheme. Table~\ref{hv_trans} shows the averaged HV value over 30 runs at iterations 1, 10, 30, and 50. The results show that the use of transfer can bring the benefit of accelerating search convergences, especially at the early stages. As a result, the solutions obtained within a small number of iterations are found to be of higher quality due to the transfer. Conducting the Wilcoxon rank sum test with a 95\% confidence level, the obtained final HV value with transfer is significantly better than that without transfer.

\begin{table}[!htb]
    \centering
    \normalsize\caption{The HV value obtained at different iterations of MT$^2$O with and without transfer among subproblems. The better results are highlighted in bold.}\label{hv_trans}  
    \normalsize\centering\begin{tabular*}{\hsize}{@{}@{\extracolsep{\fill}{}}l|c|cccc@{}}
        \hline\hline
         \multirow{2}{*}{Problem} & \multirow{2}{*}{Transfer} & \multicolumn{4}{c}{Iteration} \\
         && 1 & 10 & 30 & 50  \\
         \hline
         \multirow{2}{*}{P1} & With Trans & \textbf{0.0347} & \textbf{0.4089} & \textbf{0.4872} & \textbf{0.4875} \\
         & No Trans & 0.0236 & 0.2884 & 0.3688 & 0.4675 \\
         \hline
         \multirow{2}{*}{ZDT1} & With Trans & 0 & \textbf{0.3098} & \textbf{0.7714} & \textbf{0.7946} \\
         & No Trans & 0 & 0.2720 & 0.7687 & 0.7922 \\
         \hline
         \multirow{2}{*}{ZDT2} & With Trans & 0 & \textbf{0.3141} & \textbf{0.4692} & \textbf{0.4741} \\
         & No Trans & 0 & 0.2888 & 0.4676 & 0.4714 \\
        \hline\hline
    \end{tabular*}
\end{table}

\subsection{Multi-Task Learning Datasets}

\definecolor{tab_orange}{RGB}{255, 127, 14}
\definecolor{tab_blue}{RGB}{31, 119, 180}
\definecolor{tab_cyan}{RGB}{23, 190, 207}
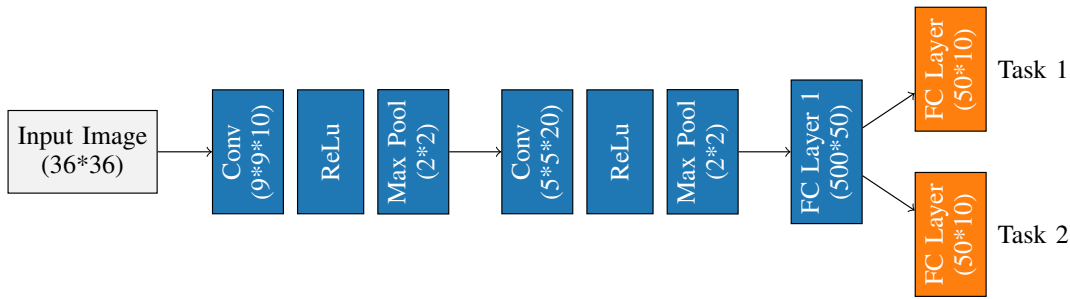
\begin{figure*}[h]
\centering
\resizebox{0.8\linewidth}{!}{
\begin{tikzpicture}[
recnode1/.style={rectangle, draw, fill=tab_blue, minimum width=15mm, minimum height = 8mm,rotate=90, text=white, align = center},
recnode2/.style={rectangle, draw, fill=tab_orange, minimum width=15mm, minimum height = 8mm,rotate=90, text = white, align=center},
]
\tikzstyle{every node}=[font=\small]
    \node[rectangle, draw, fill = gray!10, minimum size = 10mm, align=center] (image) at (-0.5,0) {Input Image\\ (36*36)};
    \node[recnode1] (conv1) at (1.5,0) {Conv \\ (9*9*10)};
    \node[recnode1] (relu1) at (2.5,0) {ReLu};
    \node[recnode1] (pool1) at (3.5,0) {Max Pool \\ (2*2)};
    \node[recnode1] (conv2) at (5,0) {Conv \\ (5*5*20)};
    \node[recnode1] (relu2) at (6,0) {ReLu};
    \node[recnode1] (pool2) at (7,0) {Max Pool \\ (2*2)};
    \node[recnode1] (f3) at (8.5,0) {FC Layer 1 \\ (500*50)};
    \node[recnode2] (lf4) at (10,1) {FC Layer \\ (50*10)};
    \node[recnode2] (rf4) at (10,-1) {FC Layer \\ (50*10)};
    \node[] (t1) at (11,1) {Task 1};
    \node[] (t2) at (11,-1) {Task 2};

    \path (image) edge[->] (conv1);
    \path (pool1) edge[->] (conv2);
    \path (pool2) edge[->] (f3);
    \path (f3) edge[->] (lf4);
    \path (f3) edge[->] (rf4);
\end{tikzpicture}
}
\caption{Architecture of the MTL network used for each subproblem for the  MultiMNIST, MultiFashionMNIST, Multi-(Fashion+MNIST) datasets.}\label{lenet}
\end{figure*}

\subsubsection{Image Classification}

We use three multi-task learning benchmark datasets, MultiMNIST \cite{sabour2017dynamic}, MultiFashionMNIST \cite{lin2019pareto}, and Multi-(Fashion+MNIST) \cite{lin2019pareto}. MNIST \cite{lecun1998gradient} is a famous public dataset of handwritten digits, which has a training set of 60,000 samples and a test set of 10,000 samples. FashionMNIST \cite{xiao2017fashion} is a public dataset of clothing images, associated with a label from 10 classes. It also has a training set of 60,000 samples and a test set of 10,000 samples. The MultiMNIST dataset is generated by combining two randomly picked images from the MNIST dataset to form one new image, which has one digit on the top-left and the other on the bottom-right. The MultiFashionMNIST is generated in a similar way to the FashionMNIST dataset. In Multi-(Fashion+MNIST) dataset, the top-left image is from MNIST and the bottom-right image is from FashionMNIST. For each dataset, there are two tasks, classifying the top-left image and classifying the bottom-right image. There are 120,000 samples in the training dataset and 20,000 samples in the test dataset. We downloaded the same datasets as in \cite{lin2019pareto} \footnote{Downloaded from: https://github.com/Xi-L/ParetoMTL}. 

We adopt the same base MTL neural network structure used in \cite{lin2019pareto}, derived from LeNet \cite{lecun1998gradient}. The structure of the MTL network used for each subproblem is shown in Fig.~\ref{lenet}. Illustrated in Fig.~\ref{lenet}, our base MTL network involves shared parameters for convolutional layers and the first fully connected layer, while task-specific parameters belong to the last fully connected layer. Cross entropy losses are used for training. We use the weighted-sum decomposition approach in MT$^2$O. The baseline is obtained by training individual tasks separately. 
We conduct training over 100 epochs with a batch size of 256. Five evenly distributed reference vectors are used.

\begin{figure*}[!htb]
\centering
 \begin{subfigure}[b]{0.32\linewidth}
    \includegraphics[width=\textwidth]{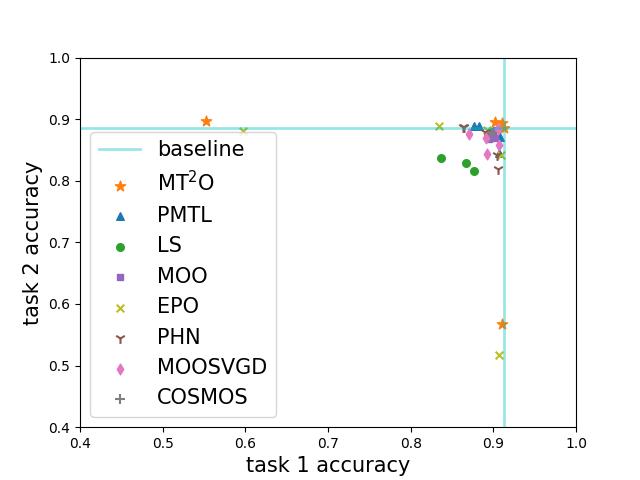}
  \end{subfigure} 
  \begin{subfigure}[b]{0.32\linewidth}
    \includegraphics[width=\textwidth]{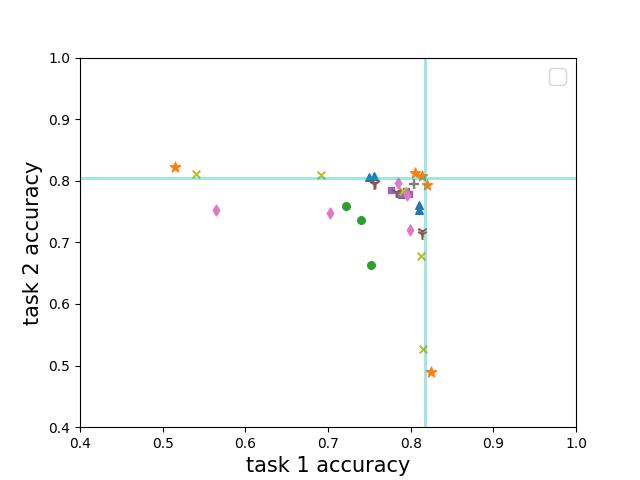}
  \end{subfigure} 
   \begin{subfigure}[b]{0.32\linewidth}
    \includegraphics[width=\textwidth]{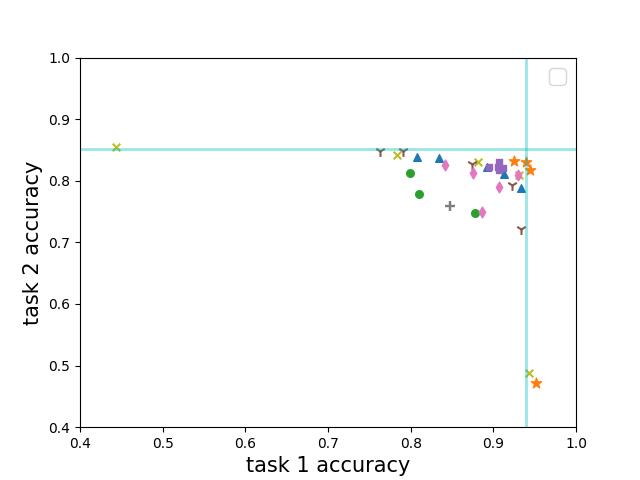}
  \end{subfigure} 
  \begin{subfigure}[b]{0.32\linewidth}
    \includegraphics[width=\textwidth]{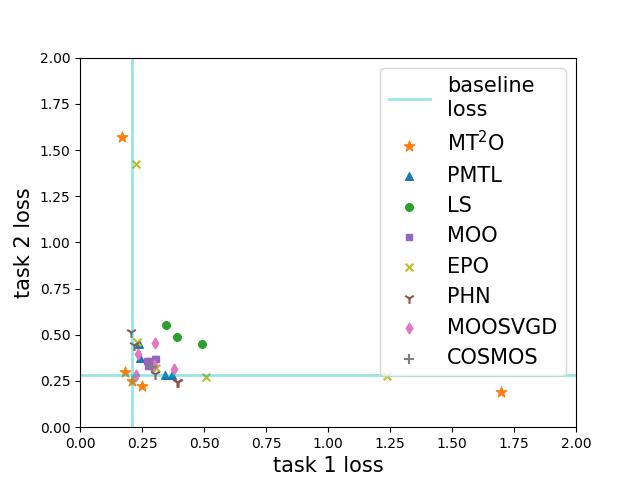}
  \end{subfigure} 
  \begin{subfigure}[b]{0.32\linewidth}
    \includegraphics[width=\textwidth]{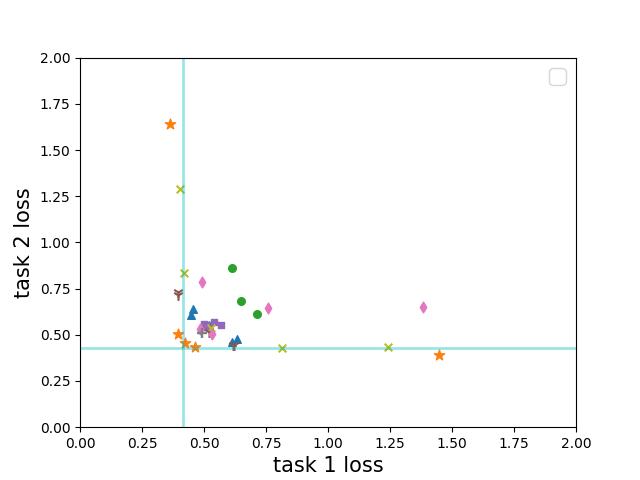}
  \end{subfigure} 
   \begin{subfigure}[b]{0.32\linewidth}
    \includegraphics[width=\textwidth]{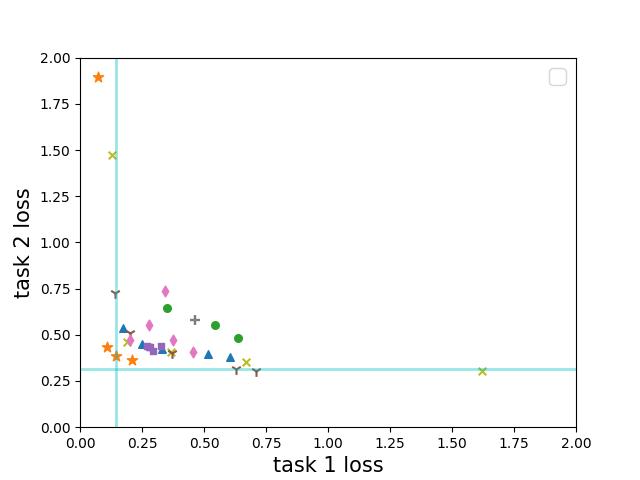}
  \end{subfigure} 
  \begin{subfigure}[b]{0.32\linewidth}
    \includegraphics[width=\textwidth]{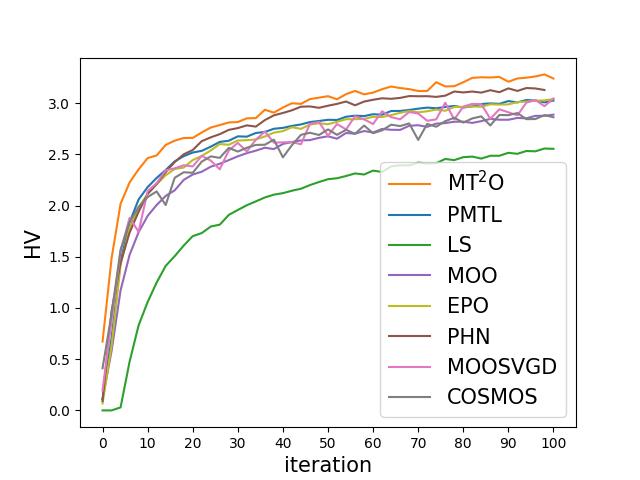}
    \caption{MultiMNIST}
  \end{subfigure} 
  \begin{subfigure}[b]{0.32\linewidth}
    \includegraphics[width=\textwidth]{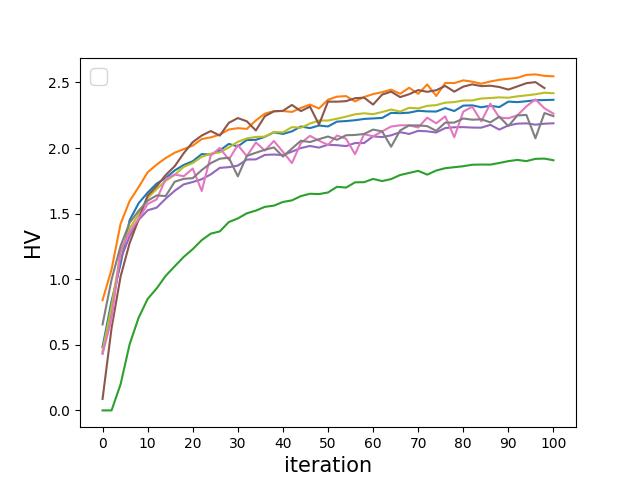}
    \caption{MultiFashionMNIST}
  \end{subfigure} 
   \begin{subfigure}[b]{0.32\linewidth}
    \includegraphics[width=\textwidth]{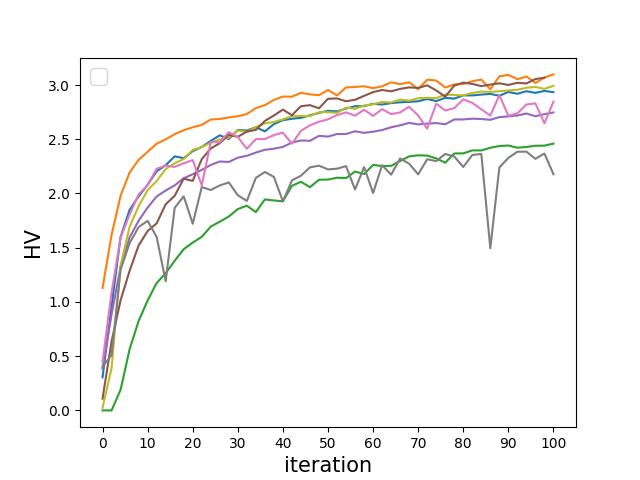}
    \caption{Multi-(Fashion+MNIST)}
  \end{subfigure} 
  \caption{The results for the three MNIST-like datasets. The top row shows the test accuracies above 0.4, the middle row shows the training losses below 2, and the bottom row shows the HV value convergence curves during the training process, calculated using the reference point (2,2).}\label{mnist}
\end{figure*}

The outcomes displayed in Fig.~\ref{mnist} illustrate MT$^2$O's ability to generate multiple well-distributed solutions. The top row exhibits test accuracies, showcasing MT$^2$O's attainment of the highest per-task accuracies in the first two datasets. In the third dataset, Multi-(Fashion+MNIST), where task correlation is lower, MT$^2$O demonstrates compatibility with other methods and the strong single-task baseline. The middle row depicts training losses, MT$^2$O's solutions showcase dominance across all three datasets, indicating an effective optimization process. The bottom row displays HV value curves during training, where MT$^2$O's HV values exhibit marginal superiority or comparability with other methods.


Across all experiments, the performance of LS consistently falls behind other methods. Recall that, in cases where there's no transfer among the subproblems, MT$^2$O, utilizing weighted-sum decomposition, regresses to LS. Consequently, the superior performance observed in MT$^2$O primarily arises from inter-subproblem transfer.
Intuitively, this means that the fitness landscape of the training deep neural networks is likely to be non-convex, leading gradient-based methods to converge on inferior local Pareto optima. Inter-subproblem transfer serves as a means to alleviate this issue.

In MT$^2$O, all the weights of the network for each subproblem are transferred among the subproblems based on the transfer coefficients in the experiment. An interesting topic is to transfer partial weights, such as only weights of the convolution layers are transferred to promote similarities among subproblems. This can be achieved simply by setting the transfer coefficients to a diagonal matrix instead of a scalar. More sophisticated methods can thus be considered to design transfer coefficients that best fit the problem of interest.

\subsubsection{Image classification with many tasks}

To verify our algorithm's efficacy on many task datasets and its adaptability across different base networks, we conduct experiments on the CelebA dataset \cite{liu2015deep} employing ResNet \cite{he2016deep} as the base network. With 200K face images annotated for 40 attributes, each representing a binary classification task, we concentrate on 17 attributes categorized under a common group, as delineated by \cite{sener2018multi}. We adopt the same MTL network architecture utilized in \cite{sener2018multi}, which leverages ResNet-18 without its final layer as a shared representation function and incorporates a linear layer for each attribute.

Handling 17 tasks posed challenges in utilizing reference vectors for scalable subproblem decomposition. To address this, we simplified by assigning unit vectors as references, one for each task, creating 17 individual tasks. Despite this simplification, achieving results with other Pareto MTL methods, except for MOOMTL, proved impractical. MOOMTL, requiring the generation of a single Pareto model, remained feasible. Thus, this section compares our algorithm with MOOMTL and the baseline, which entails training a single model for each task. We determined subproblem distances based on cosine similarities between label vectors and set the neighborhood size $J=2$.

Figure~\ref{celeba_radar} presents a radar chart displaying the percentage of misclassification errors for each binary classification task. Our method demonstrates superior performance over baselines and MOOMTL across a majority of tasks and achieves comparable results in the remaining tasks. This experiment emphasizes the effectiveness of our method, particularly in managing a high number of tasks.

\begin{figure}[!htb]
    \includegraphics[width=0.9\linewidth]{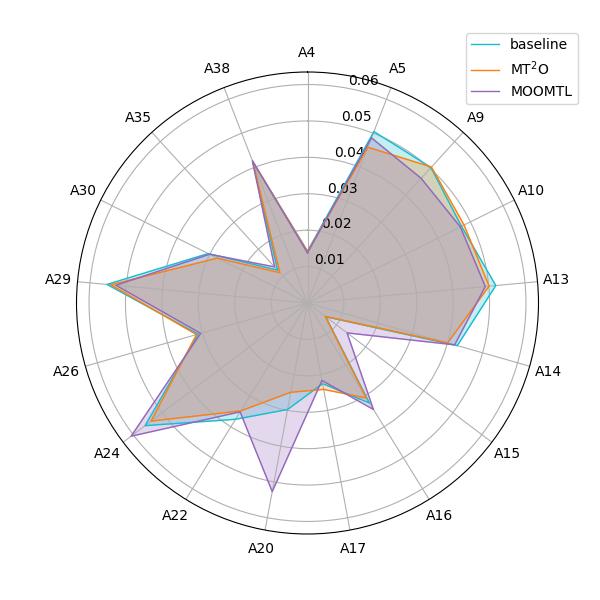}
  \caption{Attribute-wise misclassification error percentage on CelebA dataset. Lower values indicate better performance.}\label{celeba_radar}
\end{figure}

\subsubsection{Multi-Target Regression}
 
We conducted experiments on the River Flow dataset \cite{spyromitros2016multi}, comprising eight tasks aimed at predicting flow patterns over 48 hours at eight sites within the Mississippi River network. Each sample includes recent and time-lagged observations from the eight sites, resulting in 64 features and eight target variables. Training utilized 6,303 samples, with an additional 2,702 samples allocated for testing. Same as \cite{mahapatra2020multi}, we employed a four-layer fully connected feed-forward neural network (FNN) as the MTL model. The model was trained using ten randomly selected reference vectors with Mean Squared Error (MSE) as the loss function. Each of the eight objectives was separately trained as baseline models. 
Given the 8 tasks, we visualized the methods using the relative loss profile (RLP), as presented in \cite{mahapatra2020multi}. Specifically, RLP for each task represents the mean value of scaled losses, scaled by the reference vector of the subproblems. The results are depicted in Fig.~\ref{rf1}. 

\begin{figure}[!hth]
\centering
    \includegraphics[width=0.9\linewidth]{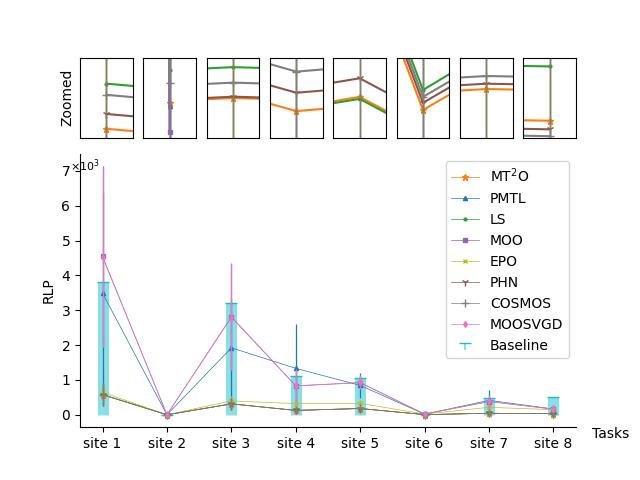}
  \caption{The mean RLP (with standard deviation) for predicting river flow at 8 sites of the Mississippi River. The MSE of the baseline for each site is divided by 8 for a fair comparison.}\label{rf1}
\end{figure}

To facilitate a clearer comparison, the figure on top depicts a zoomed-in view. Our observation reveals that MT$^2$O consistently demonstrates either superior or highly competitive performance compared to other state-of-the-art methods. Specifically, in 5 out of 8 tasks, MT$^2$O outperforms other methods, while displaying slightly inferior performance in the remaining 3 tasks. 

The superior performance of MT$^2$O over the baseline provides evidence for the advantages of MTL in handling correlated tasks. This stands in contrast to the traditional approach of learning each task independently. Furthermore, the improved performance of MT$^2$O in comparison to other MTL methods, especially LS, substantiates the significance of transferring and sharing valuable information across subproblems. This demonstrates the effectiveness of leveraging knowledge from one subproblem to enhance the solution of another, reinforcing the practical value of MTL in optimizing tasks within interrelated problem domains.

\subsubsection{Pixel-Wise Classification and Regression}

We extended our method's evaluation to a more complex scene understanding problem using the NYUv2 dataset \cite{silberman2012indoor}. This challenging indoor scene dataset comprises 1,449 RGBD images with dense per-pixel labeling, focusing on three learning tasks: 13-class semantic segmentation \cite{couprie2013indoor}, depth estimation, and surface normal estimation \cite{eigen2015predicting}. We adopted the experimental setup outlined in \cite{liu2019end}, including image preprocessing, task-specific loss functions, and a base MTL neural network architecture. The MTL model consists of SegNet \cite{badrinarayanan2017segnet} serving as the shared representation encoder and three task-specific lightweight convolutional layers. The dataset used matches that in \cite{liu2019end} \footnote{Downloaded from: https://github.com/lorenmt/mtan}. Our dataset split comprised 796 training images and 654 testing images. The model underwent training for 200 epochs. Hyperparameters remained consistent with previous experiments, except for a batch size of 2. For reference vectors, we employed five randomly generated vectors and three unit vectors. Due to significant time consumption in obtaining acceptable results on the larger NYUv2 dataset, PMTL and PHN are omitted from this experiment. For COSMOS, we adopted $\alpha=1.2$, consistent with the parameters used in the paper for the MultiMNIST dataset. We conducted a search for $\lambda$ within the range [1, 2, 3, 4, 5, 8, 10] and reported the better result for $\lambda=3$.

Fig.~\ref{nyuv2} illustrates the training losses between each pair of tasks, demonstrating MT$^2$O's superior performance compared to all considered methods. This is further corroborated by the HV values shown in Fig.~\ref{nyuv2_hv}, affirming MT$^2$O's capacity to discover better Pareto optimal solutions. Notably, around epoch 100, MT$^2$O achieves an HV value around 14, reaching this benchmark almost 2 times faster than EPO and LS, which approach this value by epoch 200.

Table~\ref{nyuv2result} showcases the testing results for the three tasks. While baseline and MTLMOO lack the utilization of reference vectors' information, we present the testing results of a single run for these models. The best performance scores for each task are highlighted in bold. MT$^2$O consistently outperforms all comparison algorithms for each task. Specifically, the enhancement observed in MTL algorithms (LS, EPO, MT$^2$O) over the baseline is attributed to the implicit transfer within the MTL architecture. MT$^2$O's further improvement over other algorithms is owed not only to the implicit transfer within the MTL architecture but also to the explicit transfer between produced subproblems, emphasizing the advantage of our proposed algorithm over other Pareto MTL algorithms. The underwhelming performance of COSMOS and MOOSVGD likely stems from insufficiently tuned hyperparameters and improper initial values, highlighting the challenge of intricate parameter tuning in these works.

\begin{figure*}[!htb]
\centering
 \begin{subfigure}[b]{0.32\linewidth}
    \includegraphics[width=\textwidth]{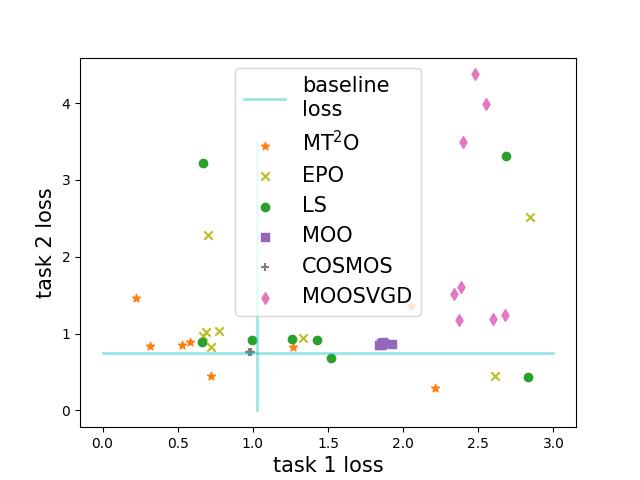}
    \caption{Task 1 vs Task 2}
  \end{subfigure} 
  \begin{subfigure}[b]{0.32\linewidth}
    \includegraphics[width=\textwidth]{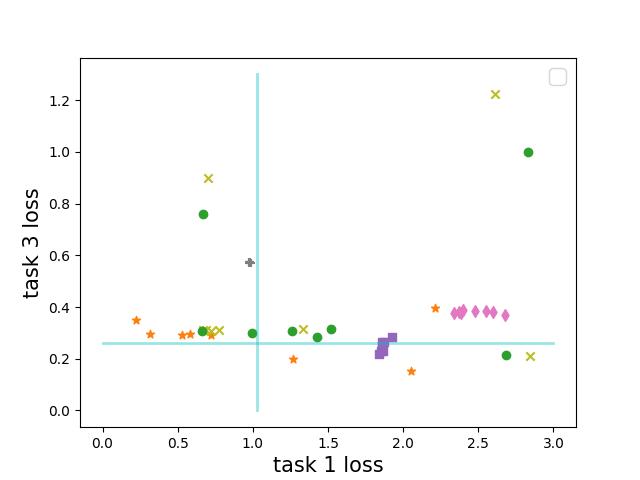}
    \caption{Task 1 vs Task 3}
  \end{subfigure} 
   \begin{subfigure}[b]{0.32\linewidth}
    \includegraphics[width=\textwidth]{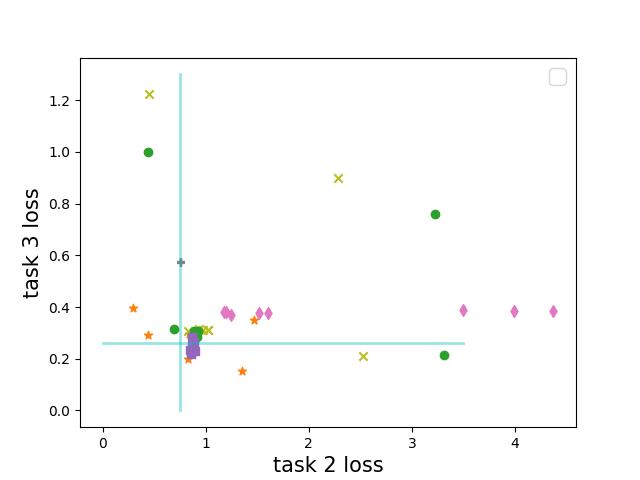}
    \caption{Task 2 vs Task 3}
  \end{subfigure} 
  \caption{The 2-D projections for the results obtained by different algorithms on NYUv2 dataset. We use five randomly generated reference vectors and three unit vectors.}\label{nyuv2}
\end{figure*}

\begin{figure}[!htb]
    \includegraphics[width=0.9\linewidth]{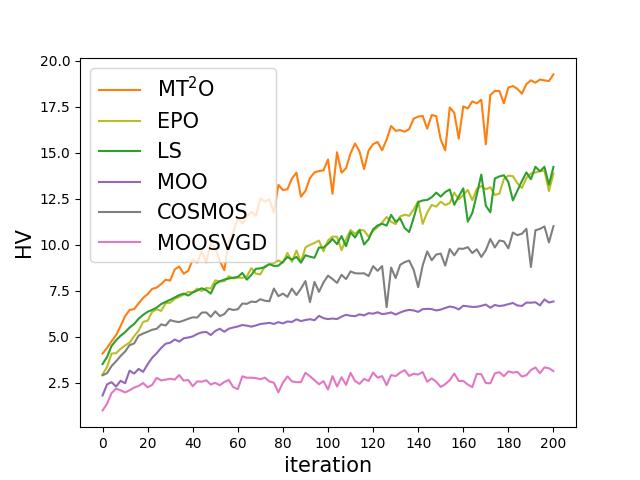}
  \caption{The HV value convergence curves during training process for the NYUv2 dataset, calculated using the reference point (3,3,3).}\label{nyuv2_hv}
\end{figure}

\begin{table*}[!htb]
    \centering
    \normalsize\caption{Semantic segmentation, depth estimation, and surface normal prediction results on the NYUv2 testing dataset. The best testing scores for each task are highlighted in bold.}\label{nyuv2result}  
    \normalsize\centering\begin{tabular*}{\hsize}{@{}@{\extracolsep{\fill}{}}l|c|cccccc@{}}
        \hline\hline
         \multirow{2}{*}{Algorithm} & \multirow{2}{*}{Reference Vector} & \multicolumn{2}{c}{Segmentation $\uparrow$}  & \multicolumn{2}{c}{Depth $\downarrow$} & \multicolumn{2}{c}{\makecell{Surface Normal\\ (Angle Distance) }$\downarrow$} \\
         && mIoU & Pix Acc & Abs Err & Rel Err &  Mean & Media \\
         \hline
         baseline  & - & 0.1587 & 0.4658 & 0.7515 & 0.3007 & 36.7616 & 33.6900 \\
         \hline
         MTLMOO & - & 0.0774 &0.3402 & 0.8536 & 0.3599 & 37.2142 & 34.4094 \\
         \hline
         \multirow{8}{*}{LS} & (0.57368574 0.09172111 0.33459315)& 0.1436 & 0.4321 & 0.8801 & 0.3649 & 41.7440 & 40.0487 \\
        
          & (0.20438813 0.07170215 0.72390972) & 0.1123 & 0.3629 & 0.8047 & 0.3408 & 42.3555 & 39.6812\\
         
          & (0.78867472 0.12564263 0.08568266) & 0.1704 & 0.4689 & 0.8475 & 0.3578 & 42.5818 & 40.9144 \\
          
          & (0.44062379 0.36721539 0.19216082) & 0.1075 & 0.3959 & 0.8707 & 0.3663 
          & 40.9371 & 38.8786 \\
          
          & (0.56127324 0.12731909 0.31140767) & 0.1390 & 0.4199 & 0.8682 & 0.3978 
          & 44.1093 & 43.1565 \\

          & \makebox[4em][s](1\hspace{\fill} 0 \hspace{\fill}0) & 0.1703 & 0.4511 & 3.1798 & 1.2083 
          & 71.8023 & 68.5502 \\

          & \makebox[4em][s](0\hspace{\fill} 1\hspace{\fill} 0) & 0.0200 & 0.0458 & 0.7219 & 0.3100 
          & 92.7729 & 92.6960 \\

          & \makebox[4em][s](0\hspace{\fill} 0\hspace{\fill} 1) & 0.0255 & 0.0760 & 3.1579 & 1.1848 
          & 36.8948 & 34.0400 \\

          \hline
          \multirow{8}{*}{EPO} & (0.57368574 0.09172111 0.33459315)& 0.1683 &0.4635 & 0.9634 & 0.4131 & 43.0733 & 41.4054\\
        
          & (0.20438813 0.07170215 0.72390972) & 0.1640 & 0.4545 & 0.9255 & 0.4322 & 43.1366 & 41.5925 \\
         
          & (0.78867472 0.12564263 0.08568266) & 0.1602 & 0.4461 & 0.9426 & 0.4348 & 44.0143 & 42.9994 \\
          
          & (0.44062379 0.36721539 0.19216082) & 0.1347 & 0.4232 & 0.8780 & 0.3948 
          & 44.5121 & 43.1929 \\
          
          & (0.56127324 0.12731909 0.31140767) & 0.1715 & 0.4578 & 0.9064 & 0.3617 
          & 42.9482 & 41.0180 \\

          & \makebox[4em][s](1\hspace{\fill} 0\hspace{\fill} 0) & 0.1618 & 0.4525 & 2.2595 & 0.8291 
          & 80.9600 & 78.3901 \\

          & \makebox[4em][s](0\hspace{\fill} 1\hspace{\fill} 0) & 0.0280 & 0.1209 & 0.7288 & 0.3031 
          & 106.7012 & 109.4692 \\

          & \makebox[4em][s](0\hspace{\fill} 0\hspace{\fill} 1) & 0.0268 & 0.0674 & 2.4440 & 0.9103 
          & 36.3225 & 33.2950 \\

          \hline

          \multirow{8}{*}{COSMOS} & (0.57368574 0.09172111 0.33459315)& 0.1568 &0.4407 & 0.8250 & 0.3507 & 60.9747 & 57.8004\\
        
          & (0.20438813 0.07170215 0.72390972) & 0.1566 & 0.4406 & 0.8249 & 0.3504 & 61.0166 & 57.8608 \\
         
          & (0.78867472 0.12564263 0.08568266) & 0.1562 & 0.4397 & 0.8258 & 0.3515 & 60.8605 & 57.6475 \\
          
          & (0.44062379 0.36721539 0.19216082) & 0.1563 & 0.4400 & 0.8250 & 0.3500 
          & 61.0006 & 57.8733 \\
          
          & (0.56127324 0.12731909 0.31140767) & 0.1569 & 0.4409 & 0.8252 & 0.3508 
          & 60.9699 & 57.7962 \\

          & \makebox[4em][s](1\hspace{\fill} 0\hspace{\fill} 0) & 0.1564 & 0.4397 & 0.8264 & 0.3521 
          & 60.7405 & 57.5052 \\

          & \makebox[4em][s](0\hspace{\fill} 1\hspace{\fill} 0) & 0.1568 & 0.4410 & 0.8260 & 0.3509 
          & 61.0260 & 57.8876 \\

          & \makebox[4em][s](0\hspace{\fill} 0\hspace{\fill} 1) & 0.1561 & 0.4395 & 0.8255 & 0.3497 
          & 60.8800 & 57.7392 \\

          \hline
          
          \multirow{8}{*}{MOOSVGD} & (0.57368574 0.09172111 0.33459315)& 0.0357 &0.2232 & 1.4351 & 0.5157 & 49.0525 & 48.5495\\
        
          & (0.20438813 0.07170215 0.72390972) & 0.0363 & 0.1834 & 1.1053 & 0.4014 & 49.4508 & 48.9521 \\
         
          & (0.78867472 0.12564263 0.08568266) & 0.0358 & 0.2365 & 3.2810 & 1.2050 & 49.7757 & 49.7691 \\
          
          & (0.44062379 0.36721539 0.19216082) & 0.0305 & 0.2425 & 3.7848 & 1.4375 
          & 49.5902 & 49.3982 \\
          
          & (0.56127324 0.12731909 0.31140767) & 0.0329 & 0.2060 & 1.5260 & 0.5300 
          & 48.9583 & 48.3475 \\

          & \makebox[4em][s](1\hspace{\fill} 0\hspace{\fill} 0) & 0.0279 & 0.1568 & 1.2135 & 0.5756 
          & 48.3452 & 47.5633 \\

          & \makebox[4em][s](0\hspace{\fill} 1\hspace{\fill} 0) & 0.0391 & 0.1991 & 1.1037 & 0.4355 
          & 49.3026 & 48.9025 \\

          & \makebox[4em][s](0\hspace{\fill} 0\hspace{\fill} 1) & 0.0407 & 0.1815 & 4.2289 & 1.6728 
          & 49.4276 & 49.0986 \\

          \hline

           \multirow{5}{*}{MT$^2$O} & (0.57368574 0.09172111 0.33459315)& 0.1747 &0.4440 & 0.8430 & 0.3425 & 41.1900 & 39.6581 \\
        
          & (0.20438813 0.07170215 0.72390972) & 0.1762 & 0.4510 & 0.7382 & 0.2990 & 41.3554 & 39.8060  \\
         
          & (0.78867472 0.12564263 0.08568266) & \textbf{0.2007} & \textbf{0.4842} & 0.8411 & 0.3589 & 41.5276 & 39.9314  \\
          
          & (0.44062379 0.36721539 0.19216082) & 0.1209 & 0.4267 & 0.8194 & 0.3428 
          & 36.3683 & 33.2279  \\
          
          & (0.56127324 0.12731909 0.31140767) & 0.1743 & 0.4461 & 0.8686 & 0.3580 
          & 41.2362 & 39.4539 \\

          & \makebox[4em][s](1\hspace{\fill} 0\hspace{\fill} 0) & 0.2002 & 0.4802 & 1.2034 & 0.4453 
          & 46.0092 & 44.3196 \\

          & \makebox[4em][s](0\hspace{\fill} 1\hspace{\fill} 0) & 0.0434 & 0.2151 & \textbf{0.6793} & \textbf{0.2972} 
          & 47.2681 & 45.7865 \\

          & \makebox[4em][s](0\hspace{\fill} 0\hspace{\fill} 1) & 0.0594 & 0.3062 & 1.2359 & 0.4372 
          & \textbf{35.6208} & \textbf{31.7099}\\

        \hline\hline
    \end{tabular*}
\end{table*}

\section{Conclusion}
In this paper, we have proposed a novel Multi-Task Learning with Multi-Task Optimization (MT$^2$O) algorithm to arrive at a representative subset of Pareto optimized models with different trade-offs among tasks in MTL. MT$^2$O jointly solves multiple subproblems generated by first transforming MTL into MOO, and then decomposing it using a diverse set of weight vectors in objective space. Exploiting the similarities between subproblems, the iterative transfer of parameter values among them is expected to accelerate convergence toward the Pareto front. 

We presented a theorem demonstrating that, under subproblems with differentiable and convex objective functions, and with symmetric transfer coefficients, the convergence rate is faster than solving the subproblems independently under certain conditions. Empirical studies encompassed various learning tasks with varying task numbers, spanning image classification, data regression, and hybrid classification and estimation. Results validate MT$^2$O's advancement of the state-of-the-art in Pareto MTL. Particularly on the extensive NYUv2 dataset, our method achieves almost 2 times faster convergence than the next-best among the state-of-the-art. The outcome encourages future work on integrating multitasking with gradient-based optimization algorithms, enabling one-pass learning of sets of specialized machine learning models.


%

\appendices

\ifCLASSOPTIONcaptionsoff
  \newpage
\fi

\bibliographystyle{IEEEtran}
\bibliography{ParetoMTL_MTO_ref.bib}

\begin{IEEEbiography}[{\includegraphics[width=1in,height=1.25in,clip,keepaspectratio]{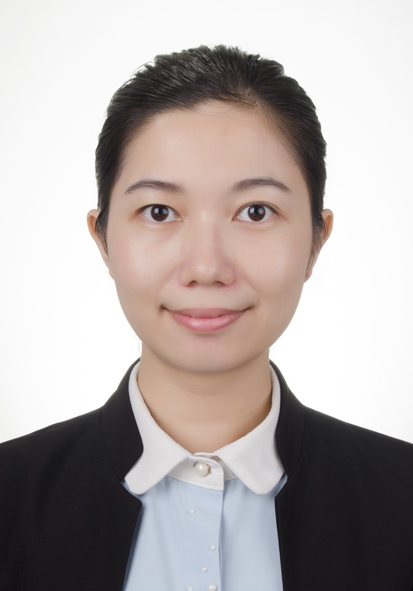}}]{Lu Bai}
received the B.Eng. and M.Eng. degrees in automation from Xiamen University, Xiamen, China, in 2012 and 2015, respectively, and the Ph.D. degree in control and optimization from Nanyang Technological University, Singapore, in 2020. She currently serves as a Research Fellow at Nanyang Technological University, Singapore. Her research interests lie in distributed control and optimization, and multi-task optimization algorithms, with application in multi-task learning and evolutionary multitasking.
\end{IEEEbiography}
\begin{IEEEbiography}[{\includegraphics[width=1in,height=1.25in,clip,keepaspectratio]{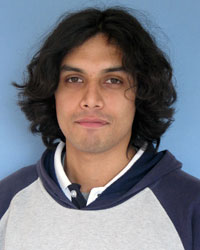}}]{Abhishek Gupta}
received his PhD in Engineering Science from the University of Auckland, New Zealand, in the year 2014. He has diverse research experience in the field of computational science, spanning topics at the intersection of optimization and machine learning, neuroevolution and scientific computing. His current research interests lie in data-efficient search algorithms with transfer and multitask learning capabilities, with application in complex engineering design. He is recipient of the 2019 IEEE Transactions on Evolutionary Computation Outstanding Paper Award for work on evolutionary multitasking. He is associate editor of the IEEE Transactions on Emerging Topics in Computational Intelligence, Complex \& Intelligent Systems journal, and the Memetic Computing journal.
\end{IEEEbiography}
\begin{IEEEbiography}[{\includegraphics[width=1in,height=1.25in,clip,keepaspectratio]{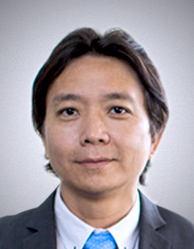}}]{Yew-Soon Ong}
(M’99-SM’12-F’18) received the Ph.D. degree in artificial intelligence in complex design from the University of Southampton, U.K., in 2003. He is President’s Chair Professor in Computer Science at Nanyang Technological University (NTU), and holds the position of Chief Artificial Intelligence Scientist of the Agency for Science, Technology and Research Singapore. At NTU, he serves as Director of the Data Science and Artificial Intelligence Research and co-Director of the Singtel-NTU Cognitive \& Artificial Intelligence Joint Lab. His research interest is in machine learning, evolution and optimization. He is founding Editor-in-Chief of the IEEE Transactions on Emerging Topics in Computational Intelligence and AE of IEEE Transactions on Neural Networks \& Learning Systems, IEEE Transactions on Artificial Intelligence and others. He has received several IEEE outstanding paper awards and was listed as a Thomson Reuters highly cited researcher.
\end{IEEEbiography}







\end{document}